\definecolor{Gray}{RGB}{255, 200, 220}
\newtheorem{theorem}{Theorem}[section]
\newtheorem{lemma}[theorem]{Lemma}
\definecolor{wacvblue}{rgb}{0.21,0.49,0.74}
\title{Countering Multi-modal Representation Collapse through Rank-targeted Fusion}
\author{%
  Seulgi Kim, Kiran Kokilepersaud, Mohit Prabhushankar, Ghassan AlRegib \\
  Georgia Institute of Technology, Atlanta, GA, USA \\
  \texttt{\{seulgi.kim, kpk6, mohit.p, alregib\}@gatech.edu} \\
  % examples of more authors
  % \And
  % Coauthor \\
  % Affiliation \\
  % Address \\
  % \texttt{email} \\
  % \AND
  % Coauthor \\
  % Affiliation \\
  % Address \\
  % \texttt{email} \\
  % \And
  % Coauthor \\
  % Affiliation \\
  % Address \\
  % \texttt{email} \\
  % \And
  % Coauthor \\
  % Affiliation \\
  % Address \\
  % \texttt{email} \\
\thanks{This work is supported by the ML4Seismic Consortium at Georgia Tech.}}
\begin{document}

\twocolumn[{%

{ \large
\begin{itemize}[leftmargin=2.5cm, align=parleft, labelsep=2cm, itemsep=4ex,]

\item[\textbf{Citation}]{S. Kim, K. Kokilepersaud, M. Prabhushankar, G. AlRegib, ``Countering Multi-modal Representation Collapse through Rank-targeted Fusion," in \textit{2026 IEEE/CVF Winter Conference on Applications of Computer Vision (WACV), Tucson, Arizona, USA, 2026.}}

\item[\textbf{Review}]{Date of Acceptance: Nov. 9th 2025}

\item[\textbf{Codes}]{\url{https://github.com/olivesgatech/R3D}}

\item[\textbf{Bib}]  {@inproceedings\{kim2025countering,\\
    title=\{Countering Multi-modal Representation Collapse through Rank-targeted Fusion\},\\
    author=\{Kim, Seulgi and Kokilepersaud, Kiran and Prabhushankar, Mohit and AlRegib, Ghassan\},\\
    booktitle=\{IEEE/CVF Winter Conference on Applications of Computer Vision\},\\
    year=\{2026\}\}}

%\item[\textbf{Copyright}]{\textcopyright 2022 IEEE. Personal use of this material is permitted. Permission from IEEE must be obtained for all other uses, in any current or future media, including reprinting/republishing this material for advertising or promotional purposes,
%creating new collective works, for resale or redistribution to servers or lists, or reuse of any copyrighted component
%of this work in other works.}

\item[\textbf{Contact}]{
\{seulgi.kim, kpk6, mohit.p, alregib\}@gatech.edu\\\url{https://alregib.ece.gatech.edu}\\}

\item[\textbf{Main\\contact}]{
alregib@gatech.edu}
\end{itemize}

}}]
\newpage

\maketitle
\begin{abstract}
Multi-modal fusion methods often suffer from two types of representation collapse: feature collapse where individual dimensions lose their discriminative power (as measured by eigenspectra), and modality collapse where one dominant modality overwhelms the other. Applications like human action anticipation that require fusing multifarious sensor data are hindered by both feature and modality collapse. However, existing methods attempt to counter feature collapse and modality collapse separately. This is because there is no unifying framework that efficiently addresses feature and modality collapse in conjunction. In this paper, we posit the utility of effective rank as an informative measure that can be utilized to quantify and counter both the representation collapses. We propose \textit{Rank-enhancing Token Fuser}, a theoretically grounded fusion framework that selectively blends less informative features from one modality with complementary features from another modality. We show that our method increases the effective rank of the fused representation. To address modality collapse, we evaluate modality combinations that mutually increase each others' effective rank. We show that depth maintains representational balance when fused with RGB, avoiding modality collapse. We validate our method on action anticipation, where we present \texttt{R3D}, a depth-informed fusion framework. Extensive experiments on NTURGBD, UTKinect, and DARai demonstrate that our approach significantly outperforms prior state-of-the-art methods by up to 3.74\%. Our code is available at: \href{https://github.com/olivesgatech/R3D}{https://github.com/olivesgatech/R3D}.

%These issues hinder the generalization of multi-modal models. We hypothesize that an ideal multi-modal representation should exhibit Mutual Effective Rank Gain, which addresses the above two key limitations in fusion. To address feature collapse, we propose \textit{Rank-enhancing Token Fusion}, a theoretically grounded fusion framework that selectively blends less informative channels (i.e., channel contributing little to the principal components of the representation) with complementary features from another modality. We show that our method increases the effective rank of the fused representation. To address modality collapse, we further evaluate which modality combinations best preserve modality-specific features. Among various candidates, we find that Depth maintains representational balance when fused with RGB, avoiding modality collapse. We validate our method on 3D action anticipation, where we present \texttt{R3D}, the first depth-informed fusion framework in action anticipation. Extensive experiments on NTURGBD, UTKinect, and DARai demonstrate that our approach significantly outperforms prior state-of-the-art methods by up to 3.74\%. Upon acceptance, we will release our codebase to facilitate further research in this domain.
\end{abstract}

% Action anticipation aims to predict future human actions from partially observed videos, enabling intelligent systems to respond proactively. However, existing methods primarily rely on RGB-based appearance or motion features, which struggle to capture fine-grained spatial semantics critical for distinguishing similar actions. To address this, we propose \textbf{Rank-enhancing Fusion in 3D (R3D)}, the first depth-informed framework for 3D action anticipation. We leverage depth information, which we empirically show to enhance effective feature diversity and capture high-level spatial semantics such as directionality and prepositionality. We introduce Rank-based Token Fuser, a novel fusion mechanism that computes rank-based feature importance and selectively exchanges weak features between RGB and depth. We support this mechanism with theoretical justification to explain why rank-based fusion is beneficial. Additionally, we propose Temporal Fuser, which models the evolving dependencies between RGB and depth over time, allowing the model to prioritize different modalities at different action stages. 
% Extensive experiments on NTURGBD, UTKinect, and DARai demonstrate that our approach significantly outperforms prior methods by up to 4.23\%, setting a new benchmark for multi-modal action anticipation. Our findings highlight the importance of depth-aware spatial reasoning and introduce a more adaptive strategy for multi-modal fusion in action anticipation. Upon acceptance, we will release our codebase to facilitate further research in this domain.
    
\begin{figure}[t!]
\begin{center}
\includegraphics[width=\linewidth]{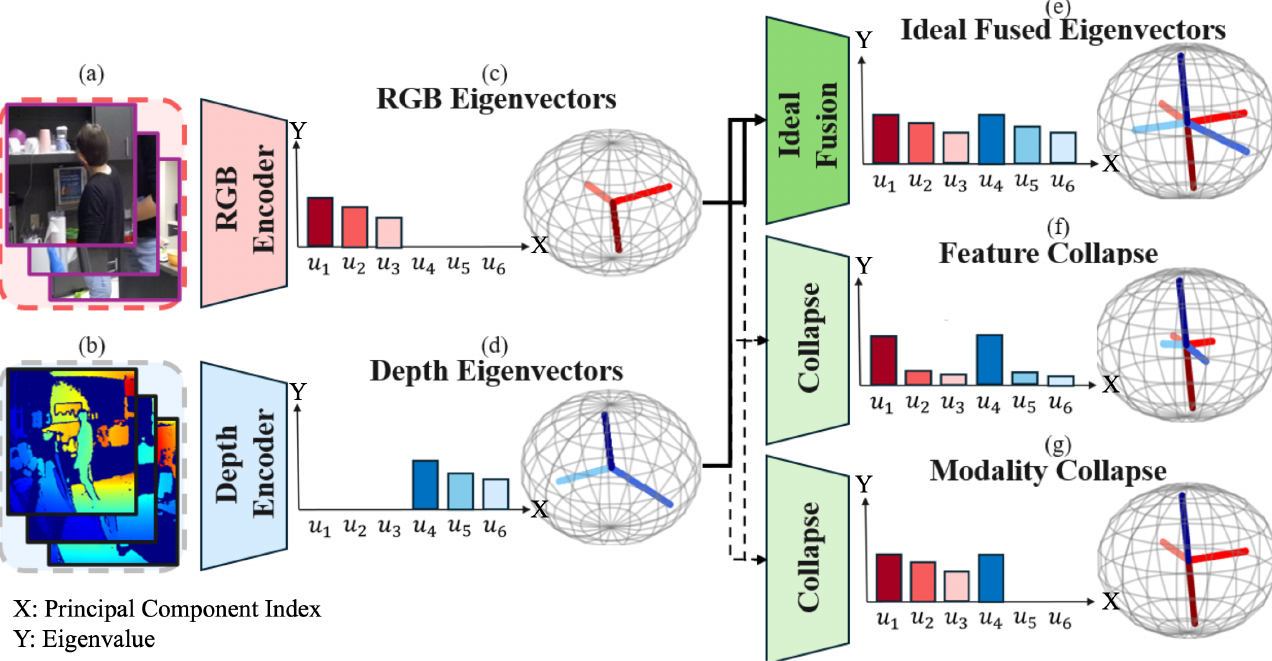}
\end{center}
\vspace{-0.7cm}
   \caption{This is a toy figure describing feature and modality collapse using spectral decomposition. (e): The ideal fused representation preserves complementary eigenvectors from both data modalities. (f): In contrast, feature collapse occurs when the fused representation varies along a subset of eigenvectors. (g): Modality collapse occurs when one modality dominates and suppresses the contribution of the eigenvectors of the other modality.
}
\label{fig:figure1}
\vspace{-0.5cm}
\end{figure}

\section{Introduction}
\label{sec:intro}
Modern data collection paradigms \cite{DARaiDataset} utilize multifarious sensor modalities to represent an environment, as each modality captures unique aspects of a scene. For example, RGB excels at conveying visual cues in objects such as color or texture, while depth captures geometric structures and directional relationships between objects (Figure~\ref{fig:figure1} (a-b)). Hence, multi-modal learning \cite{yuan2025survey} is indispensable for human-centric tasks such as action anticipation~\citep{kim2025multi,zhong2023anticipative,barsoum20203d, guan2022afe, cao2022qmednet, pmlr-v28-koppula13}, video understanding~\citep{huang2024multi, kaviani2025hierarchical}, and action recognition~\citep{yarici2025subject}, where single modalities fall short in expressing subtle semantic variations within or across scenes. Recent multi-modal fusion methods have achieved empirical success through contrastive losses \cite{radford2021learning,girdhar2023imagebind}, attention blocks \cite{yu2019deep}, or various specialized objectives \cite{fu2021seamless}. However, they often rely on learning representations via indirect pre-text tasks or alignment losses~\citep{liu2025tacfn}, rather than directly targeting the informative content in each modality. This lack of \textit{targeted information fusion} makes multi-modal fusion models susceptible to the common pitfalls: feature collapse ~\citep{laurent2023feature} and modality collapse \cite{chaudhuri2025closer}. A central technical challenge we tackle in this paper is the lack of a unifying information framework that targets both feature and modality collapse.

We illustrate \textit{targeted information fusion} in Figure \ref{fig:figure1}. In (c) and (d), RGB and Depth are projected into their respective representation spaces using their corresponding encoders. Each space is visualized with a bar plot of eigenvalues associated with eigenvectors $[u_1, ..., u_6]$ and a toy 3D spherical projection of the data points. Intuitively, the eigenvectors of each space represent the principal directions of variation in modality-specific representations. For the purposes of discussion, we define each eigenvector as a combination of projections from individual features (channels). Under this view, a channel is considered \textit{informative} if it contributes significantly to key eigenvector directions, indicating strong alignment with the principal variations in the space. Thus, we interpret eigenvectors in Figure~\ref{fig:figure1} as \textit{information components}, that is, the feature directions encoding useful content from each modality.

An ideal representation space after fusion should reflect Figure~\ref{fig:figure1}(e), where each information component is equally represented, resulting in a space that integrates feature directions encoding useful content from both modalities. However, in reality, different types of collapse can occur in the fused representation space. In Figure~\ref{fig:figure1}(f), we visualize feature collapse where certain information components become limited in their contribution to the overall space after fusion. This limits the overall feature diversity and the generalizability of the model \cite{kokilepersaud2025adadim, benkert2024transitional}. In Figure~\ref{fig:figure1}(g), we observe modality collapse \cite{chaudhuri2025closer} where information components are expressed for one modality over the other, limiting the sharing of complementary features between the modalities.

To address these challenges, we propose an ideal fusion strategy for action anticipation that addresses both representation collapse. First, to counter feature collapse, we aim to increase the effective rank~\citep{roy2007effective} of the fused representation matrix, as it serves as a proxy for the diversity of information. This is motivated by prior work defining the effective rank as matrix entropy~\citep{von2018mathematical,renyi1961measures,zhang2023matrix}. Based on this, we develop \textit{Rank-enhancing Token fuser}, a theoretically grounded fusion framework that selectively blends \textit{less informative} channels (i.e., channels that contribute little to the principal eigenvector directions) with complementary features from another modality. Second, we posit that compatible modalities mutually increase each other's effective rank. Hence, to address modality collapse, we perform an analysis of different modalities to pair with RGB for action anticipation. We demonstrate through a relative rank analysis that the depth modality results in the most balanced feature space between both modalities. We use this intuition to develop \texttt{R3D} (Rank-enhancing fusion in 3D), the first depth-informed action anticipation architecture. We demonstrate performance improvements across a wide variety of benchmarks. Our main contributions are summarized as follows:

\begin{enumerate}
    \item \textbf{Rank-enhancing Token Fuser}: We are the first to formulate multi-modal fusion as a problem of \textit{rank-targeted fusion} to simultaneously address feature collapse and modality collapse. We provide theoretical conditions under which selective channel blending provably increases effective rank and correspondingly prevents representation collapse.

    \item \textbf{Depth-aware 3D Action Anticipation}: We present \texttt{R3D}, the first depth-informed framework for 3D action anticipation and show that depth is the most complementary modality to pair with RGB for preserving modality-specific features.

    \item \textbf{State-of-the-art performance}: \texttt{R3D} achieves up to 3.74\% performance improvement on NTURGBD, UTKinect, and DARai, setting a new benchmark for multimodal action anticipation.
\end{enumerate}
% \item \textbf{Rank-enhancing Token Fusion}: We introduce a novel fusion method that selectively blends less informative channels, increasing feature diversity. We provide theoretical conditions under which selective channel blending provably increases effective rank.

\vspace{-0.2cm}
\section{Related Works}
\label{sec:related_works}

\subsection{Action Anticipation}
The availability of large-scale video datasets~\citep{damen2022rescaling,grauman2022ego4d,kuehne2014language,li2018eye,kokilepersaud2023focal} has spurred significant progress in addressing the challenge of action anticipation. These methods can be broadly categorized into two types. First, short-term action anticipation focuses on predicting a single future action that will occur within a few seconds~\citep{fernando2021anticipating,furnari2019would,sener2020temporal}. In contrast, long-term action anticipation seeks to forecast an extended sequence of future actions from a long-range video, aiming to predict events far into the future~\citep{abu2018will,abu2021long,sener2020temporal,gong2024actfusion, huang2024multi, kim2025multi}. To predict long-term action, recent advances in action anticipation leverage multi-modal inputs to capture complementary contextual cues~\citep{lai2024human}. For instance,~\citep{zatsarynna2021multi, zhang2024multi} explicitly integrated object features alongside RGB data. ~\citep{zhong2023anticipative} leveraged audio to capture off-camera events and to disambiguate visually similar actions. ~\citep{beedu2024efficacy, kim2025multi} integrates text to provide rich and fine-grained semantic context. As humans perform actions in a 3D world, prior studies have explored the use of human pose as a complementary modality~\citep{duan2022revisiting, guan2022afe, barsoum20203d, cao2022qmednet, pmlr-v28-koppula13}, often relying on motion capture systems or pose estimation pipelines. In contrast, we are the first to introduce raw depth data as a multi-modal input without requiring additional motion capture hardware, making it practical and deployable in real-world settings where commodity RGB-D cameras are readily available. 

\subsection{Multi-modal Fusion Method}
A core challenge in multi-modal learning is the effective fusion of heterogeneous data streams~\citep{huang2020pixel, hu2021unit, akkus2023multimodal, chen2024internvl, kim2021vilt}. Fusion strategies are broadly categorized based on their methodology of fusion. Aggregation-based fusion strategies combine features through concatenations, summation, or attention pooling~\citep{valada2020self, rotondo2019action, hazirbas2016fusenet, zeng2019deep,prabhushankar2022olives}. Alignment-based fusion methods temporally or semantically align modality representations~\citep{wang2016learning, zatsarynna2021multi, zhong2023anticipative, beedu2024efficacy, kim2025multi, huang2020pixel, kim2021vilt}. Finally, hybrid approaches integrate both aggregation and alignment techniques~\citep{baltruvsaitis2018multimodal}. Despite their architectural advances, these fusion strategies often suffer from feature redundancy where redundant or weakly informative features from one modality can overwhelm the shared representation space, resulting in suboptimal fusion~\citep{liu2025tacfn}. Our method addresses this limitation by encouraging diverse representations, as evidenced by an increased effective rank in the joint embedding space.

\subsection{Approximating Information through Rank}
The technical novelty in this paper is targeted multi-modal feature fusion. We utilize the effective rank of the feature matrix as a measure of information content. \cite{von2018mathematical,renyi1961measures} identify that some measure of the uniformity of the singular value spectrum, such as effective rank \cite{roy2007effective}, is representative of the entropy of a matrix \cite{zhang2023matrix}, and its information content. Intuitively, a more uniform singular value distribution implies that the representation of the data varies along a greater number of feature directions, thus exhibiting greater diversity and countering potential representational collapse \cite{jing2021understanding}. To capture this eigenvalue uniformity as a metric, the effective rank computes the entropy of the normalized eigenspectrum of a matrix $Z$ with associated eigenvalues $[\sigma_1, ..., \sigma_r]$ as follows:
\vspace{-0.2cm}
\begin{equation}
\label{eq:1}
\mathrm{ERank}(Z) := \exp\left(-\sum_{j=1}^{\text{rank}(Z)} p_j \log p_j\right), \quad p_j = \frac{\sigma_j(Z)}{\|Z\|_*}.
\end{equation}
Due to these useful properties, measuring the rank of a representation space has gained traction in various research areas. Within self-supervised learning, \cite{thilak2023lidar,agrawal2022alpha,garrido2023rankme} use some measure of the eigenvalue spectra as a way to identify better performing models without the need for task specific fine-tuning. \cite{kokilepersaud2025adadim} show that adapting a loss during training based on rank measurements can improve downstream performance. Other works devise loss functions with the explicit goal of increasing rank \cite{bardes2021vicreg,kokilepersaud2024hex} and manipulating eigenvalues directly \cite{kim2023vne}. Despite the potential of integrating rank into training paradigms, there is limited work on applying rank as a mechanism to improve modality fusion. \cite{chaudhuri2025closer} analyze representational collapse in multi-modal architectures through effective rank measurements. However, their method does not involve directly integrating the rank and instead relies on discriminator networks. In contrast, our work is the first to demonstrate that rank information can directly inform targeted feature fusion.

\section{Analysis}
\label{sec:analysis}
% \subsection{The Role of Rank Gain in Action Anticipation}

% Capturing fine-grained semantics is crucial in the context of action anticipation~\citep{kim2025multi}. In this section, we demonstrate that training in the direction of increasing effective rank serves as a means of injecting fine-grained semantics. In Figure~\ref{fig:fine-grained-rank}, we compare the increase in effective rank between coarse and fine-grained supervision settings, where only the ground-truth labels differ while all other training conditions remain identical. We observe that under coarse supervision, the gain in effective rank is consistently lower than under fine-grained supervision across both modalities and at all observation rates. This can be interpreted empirically as:
% \[\text{Lack of fine-grained semantic} \Rightarrow  \Delta \mathrm{ERank} \text{ low}.\] Taking the contrapositive,
% \[\Delta \mathrm{ERank} \text{ high} \Rightarrow \text{Capability for fine-grained representation}.\]
% That is, learning dynamics or mechanisms that cause large increases in effective rank are indicative of a model encoding fine-grained distinctions. This provides an empirical justification for our core hypothesis: that increasing the effective rank is a sufficient direction for learning fine-grained semantic representations.
\subsection{Theory: Rank-enhancing Multi-modal Fusion}
% Existing multi-modal fusion methods often concatenate or sum features from different modalities~\citep{valada2020self} without considering how \textbf{\textit{informative}} each channel is. However, human perception selectively focuses on the most \textbf{\textit{informative}} modality depending on the context \cite{arevalo2019s}. Inspired by this concept, 
We introduce a novel \textit{Rank-enhancing Token Fuser (RTF)} that selectively blends \textit{less informative} channels with features from another modality in a way that provably increases the effective rank of the resulting representation. We show the proof in Section~\ref{lab:supp}.

\vspace{0.2cm}
\noindent \textbf{\textit{Notation.}} Let $X \in \mathbb{R}^{T \times D}$ denote the representation matrix of a modality, where $T$ is the number of timesteps and $D$ is the channel (feature) dimension. Let the singular value decomposition (SVD) of $X$ be $X = U \Sigma V^\top$, where $\Sigma = \mathrm{diag}(\sigma_1, \dots, \sigma_r)$ contains singular values in decreasing order, $V = [v_1, \dots, v_r]^\top \in \mathbb{R}^{r \times D}$ contains the corresponding right singular vectors. We refer to these right singular vectors as the principal directions of the representation. Let \( x_c \in \mathbb{R}^T \) denote the \( c \)-th column of \( X \). Similarly, let $Y \in \mathbb{R}^{T \times D}$ denote features from another modality with columns \( y_c \in \mathbb{R}^T \). These serve as candidates for enhancing \textit{less informative} channels in $X$ through selective blending.

\vspace{0.1cm}
\noindent \textbf{\textit{Definition 1. Channel Informativeness.}} We define the \textit{informativeness} of channel \( c \) as its contribution to the top-\( k \) singular vectors of the representation:  $I_c := \sum_{i=1}^k \sigma_i^2 v_{i,c}^2$, where \( v_{i,c} \) is the \( c \)-th component of the \( i \)-th right singular vector \( v_i \). Channels with low \( I_c \) are considered \textit{less informative} and are potential candidates for fusion with complementary channels from other modalities. Let \( \mathcal{C}_{\text{low}} := \{ c \in \{1, \dots, D\} \mid I_c \leq \eta \} \) denote the set of \textit{low informativeness channels}, where \( \eta > 0 \) is a predefined threshold.

\vspace{0.1cm}
\noindent \textbf{\textit{Definition 2. Fusion.}} We define a fused representation \( X' \in \mathbb{R}^{T \times D} \) by selectively blending channels in \( \mathcal{C}_{\text{low}} \) with signals from modality \( Y \):
\[
x_c' = 
\begin{cases}
\alpha_c x_c + (1 - \alpha_c) y_c & \text{if } c \in \mathcal{C}_{\text{low}}, \\
x_c & \text{otherwise},
\end{cases}
\]
where \( \alpha_c \in [0, 1] \) is a learnable channel-wise blending coefficient. The goal is to enhance underutilized directions while preserving already informative ones.

\begin{theorem}[Channel Fusion Increases Effective Rank]
\label{thm:main} Let \( u_1, \dots, u_k \in \mathbb{R}^T \) denote the top-\( k \) left singular vectors of \( X \), and define \( \delta_k := \sigma_k - \sigma_{k+1} \) as the singular value gap, which quantifies the separation between the dominant subspace (top-\( k \)) and the residual space.

\noindent Assume:
\begin{enumerate}
    \item \textbf{(Bounding condition)} For all \( c \in \mathcal{C}_{\text{low}} \), \( y_c \) is zero-mean and satisfies \( \| y_c \|_2 \leq \beta \) for some constant \( \beta > 0 \).
    
    \item \textbf{(Non-trivial modification in the representation)} There exists $\exists \epsilon > 0$ such that $\sum_{c \in \mathcal{C}_{\text{low}}} \|x_c' - x_c\|_2^2 \geq \epsilon$
    
    \item \textbf{(Dominant subspace preservation)} The updated representation should not distort the dominant subspace of \(X\). This is ensured by requiring the update to only target low-informativeness channels. Specifically, the threshold \( \eta \) is small enough to satisfy $\sqrt{\eta} \leq \min\left(\frac{\delta_k}{3\sqrt{|\mathcal{C}_{\text{low}}|}}, \frac{\epsilon}{4|\mathcal{C}_{\text{low}}|\beta}\right)$
    
    % \item \textbf{(Spectral flattening via orthogonal injection)} Injected channels \( y_c \) are sufficiently orthogonal to the dominant subspace of \( X \), i.e., $|\langle y_c, u_i \rangle| \leq \tau \|y_c\|_2$ for small constant $\tau > 0$

    \item \textbf{(Bounded Dominant Subspace Alignment)} The injected channels $y_c$ are not perfectly aligned with the dominant subspace of $X$. Formally, the projection of each injected channel $y_c$ to the dominant subspace of $X$ \( \mathcal{U}_k = \mathrm{span}\{u_1, \dots, u_k\} \) is bounded by its projection onto the remaining subspace. i.e., $\sum_{i=1}^k |\langle y_c, u_i \rangle|^2 \leq \gamma \sum_{i=k+1}^r |\langle y_c, u_i \rangle|^2, \quad \text{where } 0 \leq \gamma < 1.$
    
\end{enumerate}
Then $y_c$ introduces novel directions in the feature space of $X$, hence the effective rank satisfies $\mathrm{ERank}(X') > \mathrm{ERank}(X)$, where the equation of $\mathrm{ERank}(X)$ follows~\eqref{eq:1}.
\end{theorem}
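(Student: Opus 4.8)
The plan is to write $X' = X + \Delta$, where the perturbation $\Delta$ is supported only on the columns indexed by $\mathcal{C}_{\text{low}}$, with $\Delta_c = (1-\alpha_c)(y_c - x_c)$, and to show that $\Delta$ flattens the normalized singular spectrum of $X$. The starting observation is that Definition 1 has a clean geometric meaning: writing $P_k = \sum_{i=1}^k u_i u_i^\top$ for the projector onto the dominant left subspace $\mathcal{U}_k$, one has $I_c = \|P_k x_c\|_2^2$, so the channels in $\mathcal{C}_{\text{low}}$ are precisely those whose columns are nearly orthogonal to $\mathcal{U}_k$ (at most $\sqrt{\eta}$ of their energy lies in the dominant subspace). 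The fusion therefore injects $y_c$ into directions the top-$k$ subspace barely occupies, which is the mechanism by which the tail of the spectrum should grow.

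First I would establish dominant-subspace stability. Bounding the head component of each perturbed column by $\|P_k \Delta_c\|_2 \le (1-\alpha_c)(\|P_k y_c\|_2 + \|P_k x_c\|_2)$, assumption 4 gives $\|P_k y_c\|_2 \le \sqrt{\gamma}\,\beta$ while Definition 1 gives $\|P_k x_c\|_2 \le \sqrt{\eta}$, so the total head-directed perturbation $\|P_k \Delta\|_F$ is controlled by $\sqrt{|\mathcal{C}_{\text{low}}|}$ times these quantities. Assumption 3 then forces this to be small relative to the spectral gap $\delta_k$, so Weyl's inequality keeps the top-$k$ singular values close to $\sigma_1, \dots, \sigma_k$ and a Davis--Kahan bound keeps the perturbed dominant subspace close to $\mathcal{U}_k$. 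In parallel I would lower-bound the tail energy: assumption 2 guarantees $\|\Delta\|_F^2 \ge (\min_c(1-\alpha_c))^2 \epsilon$ worth of nontrivial modification, and since assumptions 1 and 4 place most of each $y_c$ (and hence of $\Delta$) in the residual subspace orthogonal to $\mathcal{U}_k$, this forces $\sum_{i>k}\sigma_i(X')^2 \ge \sum_{i>k}\sigma_i(X)^2 + c(\epsilon)$ for an explicit $c(\epsilon) > 0$. Intuitively, the head stays put and the tail grows.

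The decisive step is converting ``head preserved, tail enlarged'' into a strict increase of $\mathrm{ERank}$. I would work with the closed form $H(X) = \ln N - \frac{1}{N}\sum_j \sigma_j \ln \sigma_j$, where $N = \|X\|_*$ and $\mathrm{ERank}(X) = e^{H(X)}$, together with its gradient $\frac{\partial H}{\partial \sigma_\ell} = \frac{1}{N}\big(\sum_j p_j \ln \sigma_j - \ln \sigma_\ell\big)$. This derivative is strictly positive exactly when $\sigma_\ell$ lies below the $p$-weighted geometric mean of the spectrum, that is, for tail singular values. Since the perturbation raises below-average (tail) singular values while leaving the above-average (head) values essentially fixed, integrating this gradient along a path $\sigma(t)$ from the spectrum of $X$ to that of $X'$ yields $H(X') > H(X)$, hence $\mathrm{ERank}(X') > \mathrm{ERank}(X)$.

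I expect this last step to be the main obstacle, for two reasons. First, the comparison is for a finite perturbation, not an infinitesimal one, and the nuclear-norm normalizer $N$ moves together with the whole spectrum; a naive majorization argument fails, because increasing a middle tail singular value can violate the partial-sum ordering that majorization requires. One must therefore control the entire path, ensuring that head values stay above the weighted geometric mean and tail values stay below it throughout, so the integrand never changes sign. Second, one must guarantee that the injected energy never inflates a head singular value, the scenario that would decrease entropy, which is precisely what the gap condition in assumption 3 and the bounded-alignment condition in assumption 4 are designed to rule out. Tightening the constants so that these two conditions jointly pin the head in place while the tail provably grows is where the real work of the proof lies.
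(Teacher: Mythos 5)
Your proposal follows the paper's own proof quite closely for most of its length: the same perturbation decomposition $X' = X + \Delta$ with $\Delta_c = (1-\alpha_c)(y_c - x_c)$, a spectral-gap argument (Weyl plus a Davis--Kahan/$\sin\Theta$ bound, which is exactly the paper's Lemma~\ref{lem:stability} on dominant-subspace stability) to pin the top-$k$ spectrum, and an energy argument placing the injected mass in the residual subspace. In one place you are actually sharper: you exploit Assumption 4 already in the stability step, bounding $\|P_k y_c\|_2 \le \sqrt{\gamma}\,\beta$, whereas the paper's Step 1 uses only the crude bound $\|\Delta_c\|_2 \le \beta + \sqrt{\eta}$ and defers Assumption 4 to its Step 3 (the bound on $\langle X, \Delta\rangle$). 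Your reading of Definition 1 as $I_c = \|P_k x_c\|_2^2$ is also the literal one; the paper's proof silently switches to the full-rank sum $I_c = \|x_c\|_2^2$ when it writes $\|x_c\|_2 \le \sqrt{I_c} \le \sqrt{\eta}$, an inconsistency your projector formulation exposes. The genuine divergence is the final step: the paper converts ``head preserved, tail amplified'' into an entropy increase by asserting that the normalized spectrum of $X'$ flattens relative to that of $X$ and invoking Schur-concavity of Shannon entropy, while you differentiate $H$ in the singular values and integrate the gradient along a path, using that $\partial H/\partial \sigma_\ell > 0$ precisely when $\sigma_\ell$ lies below the $p$-weighted geometric mean. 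Your route is more explicit about what monotonicity must hold and where, at the cost of having to control sign conditions along an entire path rather than at the endpoints.

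Two caveats. First, the obstacle you flag at the end --- path-wise sign control and the failure of naive majorization --- is real, but you should know it is not resolved in the paper either: its Step 5 rests on an ``$\approx$'' for the head, a one-sided inequality for the tail, and an appeal to Schur-concavity whose required majorization ordering is never verified. So your proposal is no less complete than the published proof; your gradient formulation at least makes explicit what remains to be checked. Second, a small misreading: Assumption 2 is stated directly for $x_c' - x_c = \Delta_c$, so $\|\Delta\|_F^2 \ge \epsilon$ holds with no $\bigl(\min_c(1-\alpha_c)\bigr)^2$ factor; your version is strictly weaker (and vacuous if some $\alpha_c = 1$), and the factor should simply be dropped.
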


\begin{figure}[t!]
\begin{center}
\includegraphics[width=0.8\linewidth]{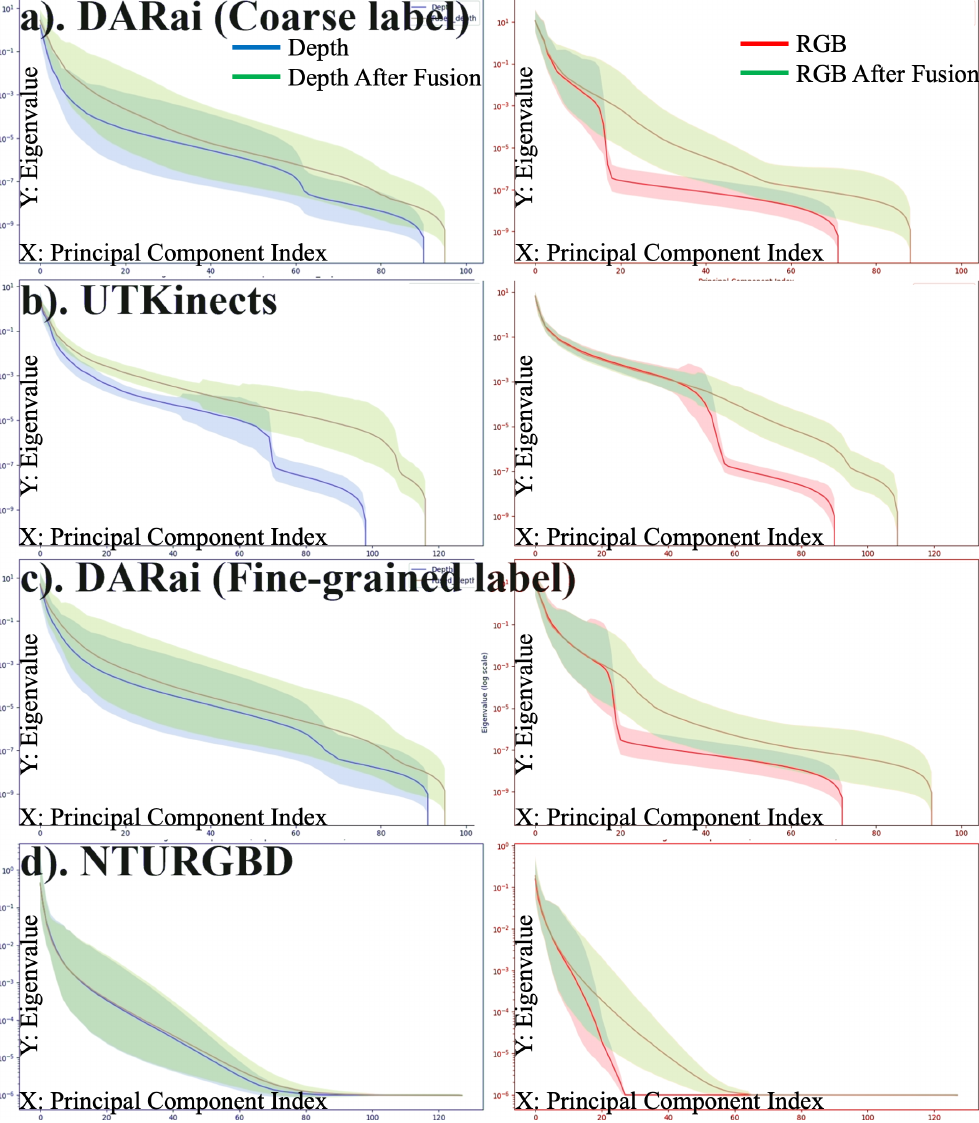}
\end{center}
\vspace{-0.5cm}
   \caption{This figure compares the eigenvalue spectra of each modality before (Depth - blue, RGB - red) and after fusion (green) using the formulation in Theorem~\ref{thm:main}. The left column shows the spectrum for the Depth modality, and the right column for RGB. Across all datasets and label granularities, the fused modality consistently exhibits a flatter spectrum in mid-to-lower components as well as the dominant ones.}
\label{fig:eigen-spectra}
\vspace{-0.5cm}
\end{figure}

In short, Theorem~\ref{thm:main} provides mathematical conditions under which selectively blending \textbf{\textit{less informative}} channels in a modality $X$ with channels from another modality $Y$ that have bounded alignment with the dominant subspace of $X$ leads to increased feature diversity, as quantified by effective rank. 

Figure~\ref{fig:eigen-spectra} shows that across all datasets, the eigenvalue spectra of both RGB and Depth representations become flatter after fusion. This flattening reflects a more uniform distribution of eigenvalues, which implies that the spectrum exhibits greater entropy. As in Equation~\ref{eq:1}, since effective rank is defined as the entropy over the spectrum, a flatter spectrum corresponds to higher effective rank, indicating a richer and more balanced representation. We provide detailed analysis of Figure~\ref{fig:eigen-spectra} in Section~\ref{sup:fig2-explained}.

Notably, this increase in effective rank occurs in both modalities, as shown in Figure~\ref{fig:eigen-spectra} (a-d). In other words, fusion not only enhances the effective rank of RGB through Depth modality, but also enhances Depth through RGB, confirming that the representational benefit is mutual. This mutual increase is desirable because it ensures that the fused representation integrates complementary information from both modalities, rather than favoring one. According to Theorem~\ref{thm:main}, such mutual improvement requires the injected features to be sufficiently complementary to the target modality, formalized by having a bounded projection onto its dominant subspace. This principle motivates the search for optimal modality combinations that maximize balanced and mutual rank enhancement.

% To further understand this behavior, as shown in Table~\ref{tab:orthogonality}, we quantitatively evaluate Assumption 4 of Theorem~\ref{thm:main} (i.e., orthogonality between modalities) by computing angles between subspaces~\citep{zhu2013angles}. Unlike the other datasets, UTKinect exhibits greater orthogonality when injecting RGB into Depth (65.57° $>$ 63.80°). This aligns with Figure~\ref{fig:eigen-spectra} (b), where the Depth representation shows both higher leading singular values and a broader tail after fusion, compared to RGB. In contrast, for the other datasets (Figure~\ref{fig:eigen-spectra} (a), (c), (d)), the fused RGB exhibits more rank expansion. These results are consistent with the conditions in Theorem~\ref{thm:main}.

\begin{figure}[t!]
\begin{center}
\includegraphics[width=0.7\linewidth]{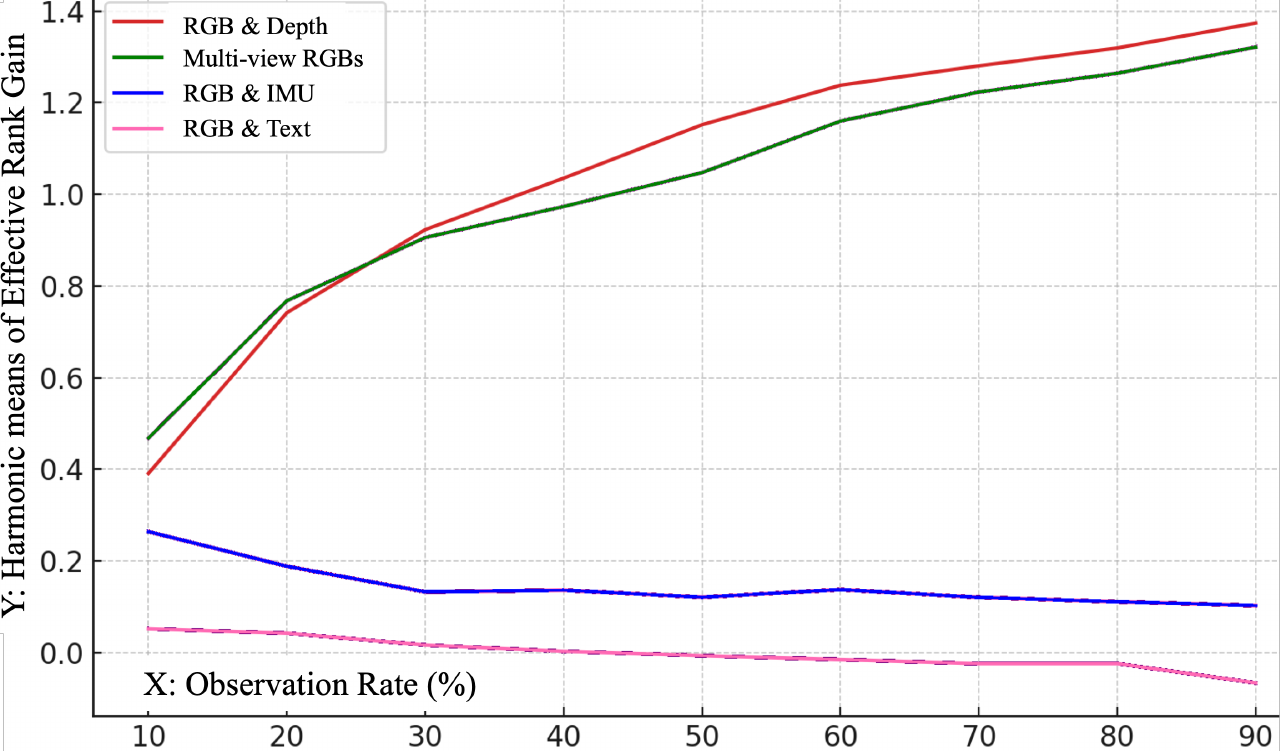}
\end{center}
\vspace{-0.5cm}
   \caption{This figure compares the Harmonic Mean of effective rank gain across four modalities: Multi-view RGB, Text, IMU, and Depth. Depth consistently achieves the highest harmonic mean across all observation rates, indicating a more balanced interaction with RGB compared to other modalities.}
\label{fig:modality-comparison}
\vspace{-0.6cm}
\end{figure}

\subsection{Modality Selection}
\label{sec:modality-selection}
In this section, we show that not all modality pairs are equally effective for preserving modality-specific features. To quantify this, we analyze how the effective rank of each modality changes after fusion with RGB across 4 different modalities: Depth, Multi-view RGB, IMU, and Text, each denoted as $\Delta_{\text{Depth}}$, $\Delta_{\text{MVRGB}}$, $\Delta_{\text{IMU}}$, $\Delta_{\text{Text}}$, respectively. In this case, all modalities are used within a fusion framework for an action anticipation task. For analysis, we extract the fused representation of each modality at an intermediate point in the network. Then, we introduce the harmonic mean of effective rank gains as our evaluation metric. This is because this score captures both the magnitude and the symmetry of increases in effective rank across both modalities. A high harmonic mean implies that both modalities gain substantially from fusion and do so in a balanced manner. Figure~\ref{fig:modality-comparison} visualizes this harmonic mean score for four modality combinations:
$\frac{2 \cdot \Delta_{\text{RGB}} \cdot \Delta_{\text{Depth}}}{\Delta_{\text{RGB}} + \Delta_{\text{Depth}}}, \frac{2 \cdot \Delta_{\text{RGB}} \cdot \Delta_{\text{MVRGB}}}{\Delta_{\text{RGB}} + \Delta_{\text{MVRGB}}}, \frac{2 \cdot \Delta_{\text{RGB}} \cdot \Delta_{\text{IMU}}}{\Delta_{\text{RGB}} + \Delta_{\text{IMU}}}, \frac{2 \cdot \Delta_{\text{RGB}} \cdot \Delta_{\text{Text}}}{\Delta_{\text{RGB}} + \Delta_{\text{Text}}}$. As shown, Depth consistently achieves the highest harmonic mean across all observation rates. This suggests that fusion with Depth yields substantial improvements in overall representational capacity (magnitude) and distributes those improvements symmetrically across both RGB and Depth modalities. In contrast, other modalities tend to enhance one modality while degrading or leaving the other unchanged. Based on this analysis, we identify Depth as the most complementary modality to RGB, as it supports mutual enhancement without collapsing the expressive structure of either representation.
\begin{figure*}[t!]
\begin{center}
\includegraphics[width=0.7\linewidth]{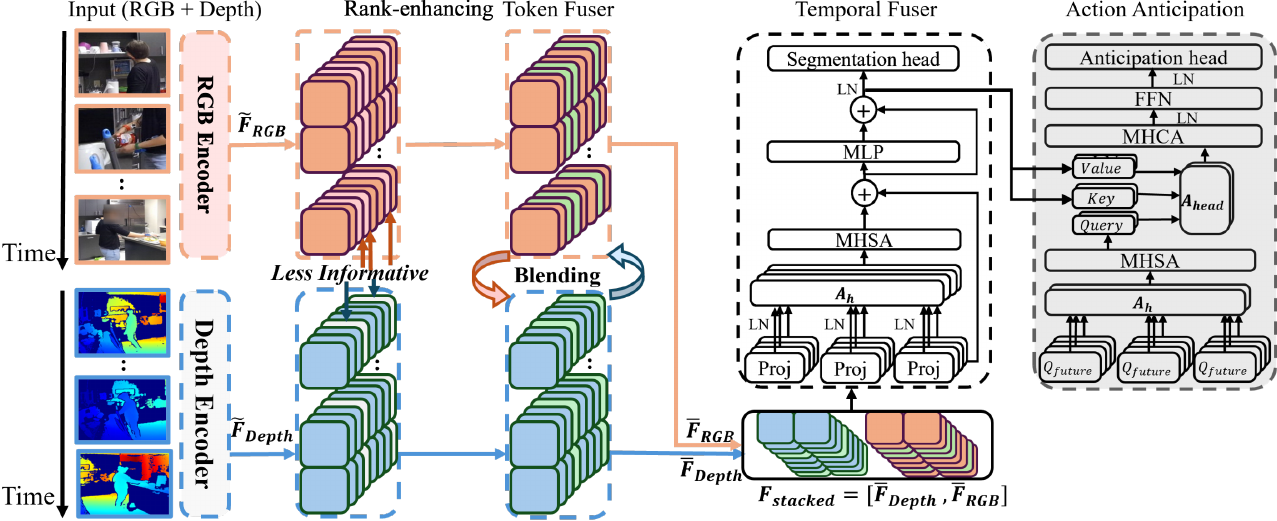}
\end{center}
\vspace{-0.5cm}
   \caption{The detailed architecture of \texttt{R3D}. It comprises three components: the Rank-Enhancing Token Fuser (\texttt{RTF}), the Temporal Fuser, and the Action Anticipation Module. The \texttt{RTF} compensates for \textbf{\textit{less informative}} channels in each modality by blending complementary information, while the Temporal Fuser captures continuous temporal dependencies and segments each timestamp. Finally, the Action Anticipation Module predicts future actions based on the integrated multi-modal information.}
\label{fig:architecture}
\vspace{-0.5cm}
\end{figure*}

\section{Methodology}
This section details the components of \texttt{R3D}. Figure \ref{fig:architecture} provides a detailed illustration of the overall architecture of \texttt{R3D}. We begin with the RGB Encoder and Depth Encoder, which extracts meaningful features from the input data (RGB video and Depth video, respectively). The Rank-enhancing Token Fuser (\texttt{RTF}) adaptively blends \textbf{\textit{less informative}} channels of both RGB and Depth modalities. Temporal Fuser integrates this multi-modal information and captures how multi-modal dependencies evolve over time for improved anticipation. Lastly, we describe the Action Anticipation Module.

\subsection{RGB and Depth Encoders}
We have a dataset consisting of $N$ paired videos consisting of RGB and Depth modalities. The RGB videos are denoted as $\{X_n^{\text{RGB}}\}_{n=1}^N$, and the corresponding Depth videos as $\{X_n^{\text{Depth}}\}_{n=1}^N$. Each RGB video $X_n^{\text{RGB}} \in \mathbb{R}^{B \times T \times H \times W \times 3} $ and Depth video $X_n^{\text{Depth}} \in \mathbb{R}^{B \times T \times H \times W \times 1} $ is encoded into visual features $\mathbf{F}^{\text{RGB}} \in \mathbb{R}^{B \times T \times C} $ and $\mathbf{F}^{\text{Depth}} \in \mathbb{R}^{B \times T \times C} $, respectively, where $B$ is the batch size, $T$ the number of frames, $H$ height, $W$ width, and $C$ the feature embedding dimension. To obtain this feature embeddings, we use a pretrained ResNet50 as the video encoder. To reduce computational cost while maintaining temporal structure, we sample frames at regular intervals using a temporal stride \( \tau \). This results in a sampled sequences $F_{\tau}^{\text{RGB}} \in \mathbb{R}^{B \times T_{\tau} \times C}$ and $F_{\tau}^{\text{Depth}} \in \mathbb{R}^{B \times T_{\tau} \times C}$, where $T_{\tau} = \left\lfloor \frac{T}{\tau} \right\rfloor$ is the number of sampled frames.
The features of the sampled frames are then passed through a linear transformation layer $W^{\text{RGB}} \in \mathbb{R}^{C \times D}$ and $W^{\text{Depth}} \in \mathbb{R}^{C \times D}$, followed by a $\text{ReLU}(\cdot)$ activation function:
\begin{equation}
\tilde{\mathbf{F}}^{\text{RGB}}  = \text{ReLU}(F_{\tau}^{\text{RGB}}W^{\text{RGB}});
\tilde{\mathbf{F}}^{\text{Depth}}  = \text{ReLU}(F_{\tau}^{\text{Depth}}W^{\text{Depth}}),
\end{equation}
where $\tilde{\mathbf{F}}^{\text{RGB}}\in \mathbb{R}^{B \times T_{\tau} \times D}$ and $\tilde{\mathbf{F}}^{\text{Depth}}\in \mathbb{R}^{B \times T_{\tau} \times D}$ represent RGB input tokens and Depth input tokens, respectively, and $D$ is a new feature dimension. The resulting features \( \tilde{\mathbf{F}}^{\text{RGB}} \) and \( \tilde{\mathbf{F}}^{\text{Depth}} \), as shown in Figure~\ref{fig:architecture}, are then fed into \texttt{RTF} for cross-modal integration.

\subsection{Rank-enhancing Token Fuser (RTF)}
In Section~\ref{sec:analysis}, we show that blending channels that are weakly aligned with the singular vectors of one modality with complementary channels from another can increase feature diversity. In this section, we translate this insight into a differentiable fusion module, \texttt{RTF}, as described in Figure~\ref{fig:architecture}.

\noindent \textbf{Channel importance estimation.} 
As illustrated in \textit{Definition 1} of Theorem~\ref{thm:main},
we quantify the \textit{importance} of each channel based on how much each channel contributes to the singular vector of the representation matrix. Given the SVD of the reshaped RGB feature matrix \( \tilde{\mathbf{F}}^{\text{RGB}} \in \mathbb{R}^{B \times T_\tau \times D} \) as \( \tilde{X}^{\text{RGB}} \in \mathbb{R}^{(B \cdot T_\tau) \times D} \), we compute \( \tilde{X}^{\text{RGB}} = U^{\text{RGB}} \Sigma^{\text{RGB}} (V^{\text{RGB}})^\top \), where the singular values are \( \sigma_1^{\text{RGB}}, \dots, \sigma_r^{\text{RGB}} \) and right singular vectors are \( v_i^{\text{RGB}} \in \mathbb{R}^D \). The \emph{channel-wise importance score} \( I_c^{\text{RGB}} \) is then defined as $I_c^{\text{RGB}} = \sum_{i=1}^r (\sigma_i^{\text{RGB}})^2 (v_{i,c}^{\text{RGB}})^2$. Similarly, we apply SVD to the reshaped Depth feature matrix \( \tilde{\mathbf{F}}^{\text{Depth}} \in \mathbb{R}^{B \times T_\tau \times D} \), resulting in the importance score denoted as $I_c^{\text{Depth}} = \sum_{i=1}^r (\sigma_i^{\text{Depth}})^2 (v_{i,c}^{\text{Depth}})^2$,
where \( I_c^{\text{RGB}},I_c^{\text{Depth}} \) reflects how much channel \( c \) contributes to the singular vector of \( \tilde{\mathbf{F}}^{\text{RGB}}, \tilde{\mathbf{F}}^{\text{Depth}}\), respectively. As illustrated in Theorem~\ref{thm:main}, channels with low \( I_c \) are considered \textit{less informative}, thus are blended with complementary features from the other modality. This targeted fusion ensures that \textit{less informative} channels in one modality are enhanced by incorporating complementary cues from the other. Thus, to identify \textit{less informative} channels, we select the bottom \( k' \) channels with the lowest importance scores:
\begin{equation}
\mathcal{I}_\text{low}^\text{RGB} = \underset{c}{\text{argmin}_{k'}} \quad I_c^\text{RGB}; \mathcal{I}_\text{low}^\text{Depth} = \underset{c}{\text{argmin}_{k'}} \quad I_c^\text{Depth}.
\end{equation}
\noindent \textbf{Adaptive channel blending.} Once the least important channels are identified, we perform cross-modal feature blending only on the bottom $k'$ least importance channels as determined by the channel importance estimation. Here, we introduce a learnable scaling parameter \(\alpha_{\mathcal{I}_\text{low}^\text{RGB}} \) and \( \alpha_{\mathcal{I}_\text{low}^\text{Depth}} \) that are randomly initialized from a uniform distribution, to enable adaptive channel blending:
\begin{equation}
\bar{\mathbf{F}}^\text{RGB}_{:, :, \mathcal{I}_\text{low}^\text{RGB}} = \alpha_{\mathcal{I}_\text{low}^\text{RGB}} \cdot y^\text{Depth}_{:, :, \mathcal{I}^\text{RGB}} + (1 - \alpha_{\mathcal{I}_\text{low}^\text{RGB}}) \cdot y^\text{RGB}_{:, :, \mathcal{I}^\text{RGB}},
\end{equation} 
\begin{equation}
\bar{\mathbf{F}}^\text{Depth}_{:, :, \mathcal{I}_\text{low}^\text{Depth}} = \alpha_{\mathcal{I}^\text{Depth}} \cdot y^\text{RGB}_{:, :, \mathcal{I}_\text{low}^\text{Depth}} + (1 - \alpha_{\mathcal{I}^\text{Depth}}) \cdot y^\text{Depth}_{:, :, \mathcal{I}_\text{low}^\text{Depth}},
\end{equation}
where the updated features, $\bar{\mathbf{F}}^\text{RGB}_{:, :, \mathcal{I}_\text{low}^\text{RGB}}$ and $\bar{\mathbf{F}}^\text{Depth}_{:, :, \mathcal{I}_\text{low}^\text{Depth}}$ denote the refined representations after adaptive channel blending for the least informative channels $\mathcal{I}_\text{low}^\text{RGB}, \mathcal{I}_\text{low}^\text{Depth}$. These parameters \(\alpha_{\mathcal{I}_\text{low}^\text{RGB}} \) and \( \alpha_{\mathcal{I}_\text{low}^\text{Depth}} \) allow the model to adaptively control the degree of feature blending between RGB and Depth channels, rather than enforcing a hard swap.

These partially updated features are then combined with the remaining channels to form the final aggregated features: $\bar{\mathbf{F}}^{\text{RGB}}$ and $\bar{\mathbf{F}}^{\text{Depth}}\in \mathbb{R}^{B \times T_{\tau} \times D}$. Finally, the fused features $\bar{\mathbf{F}}^{\text{RGB}}$ and $\bar{\mathbf{F}}^{\text{Depth}}$ are stacked together along the channel dimension ($\mathbf{F}_{\text{stacked}}$ in Figure~\ref{fig:architecture}) and passed as an input to the Temporal Fuser for sequential modeling.

\subsection{Temporal Fuser and Action Anticipation}
As illustrated in Figure~\ref{fig:architecture}, Temporal Fuser takes the output from \texttt{RTF} and captures sequential dependencies over time, to accurately interpret the meaning of each frame. Its primary goal is to integrate temporal information so that the model can determine the meaning of a scene at every time step. Section~\ref{lab:temporal-fuser} details the three components of Temporal Fuser: MHSA, MLP, and Segmentation Head.

Building on these temporal representations, Action Anticipation Module predicts future human actions. Section~\ref{lab:action-anticipation-module-arc} describes its main components: MHCA, FFN, Action Anticipation Head.

% \subsection{Action anticipation module} The Action Anticipation Module is responsible for predicting future human actions based on the sequential representations learned by the Temporal Fuser. To accomplish this, we employ a query-based attention mechanism, where a set of learnable future queries interact with the temporal representations to generate action predictions. The detailed structure is described in ~\ref{lab:action-anticipation-module-arc}.
\begin{table}[t!]
\centering
\scalebox{0.53}{
\renewcommand{\arraystretch}{1.2}
\begin{tabular}{@{}lllcccccccc@{}}
\toprule
\multirow{2}{*}{Dataset} & \multirow{2}{*}{Input}& \multirow{2}{*}{Methods}  & \multicolumn{4}{c}{$\beta (\alpha=0.2)$} & \multicolumn{4}{c}{$\beta (\alpha=0.3)$} \\  
\cmidrule(lr){4-7} \cmidrule(lr){8-11} 
 & & & 0.1 & 0.2 & 0.3 & 0.5 & 0.1 & 0.2 & 0.3 & 0.5  \\  
\midrule
\multirow{6}{*}{\shortstack{DARai~\citep{kaviani2025hierarchical} \\ (Coarse)}}  
& Uni & RNN \cite{abu2018will} & 22.40 & 22.59 & 20.71 & 18.38 & 30.75 & 25.34 & 25.99 & 23.17 \\  
&  & CNN \cite{abu2018will}& 13.28 & 13.70 & 12.99 & 13.90 & 19.85 & 16.68 & 18.69 & 17.48 \\  
&  & FUTR \cite{gong2022future} & 25.05 & 25.11 & 24.48 & 23.18 & 40.71 & 33.57 & 33.43 & 30.79 \\ 
&  & GTAN \cite{zatsarynna2024gated}& 27.70 & 29.13 & 27.35 & 26.11 & 42.47 & \textbf{42.45} & 42.27 & 34.37 \\ 
& & R3D (Uni) & 31.52 & 30.52 & 28.48 & 26.66 & 44.01 & 37.10 & 37.58 & 34.44 \\  
 \cmidrule(lr){2-11}
& Multi & AFFT \cite{zhong2023anticipative}& 23.14 & 24.78 & 23.62 & 21.02 & 33.82 & 29.25 & 28.33 & 25.45 \\
&  & m\&m-Ant \cite{kim2025multi} & 25.75 & 25.70 & 24.24 & 23.12 & 42.00 & 34.71 & 34.49 & 31.34 \\ 
 
\rowcolor{Gray} & &  \textbf{R3D (Ours)} & \textbf{33.44} & \textbf{32.14} & \textbf{30.56} & \textbf{29.59} & \textbf{46.29} & 42.05 & \textbf{43.41} & \textbf{40.25} \\  
 \midrule
\multirow{6}{*}{\shortstack{DARai~\citep{kaviani2025hierarchical} \\ (Fine-grained)}}  
& Uni & RNN \cite{abu2018will} & 7.45 & 5.69 & 4.66 & 3.58 & 9.07 & 7.35 & 6.40 & 5.04 \\  
&  & CNN \cite{abu2018will} & 7.70 & 6.28 & 5.38 & 3.98 & 6.69 & 5.48 & 5.10 & 3.86 \\  
&  & FUTR \cite{gong2022future} & 17.09 & 12.34 & 10.39 & 8.18 & 18.59 & 15.83 & 13.26 & 10.81 \\ 
 &  & GTAN \cite{zatsarynna2024gated} & 25.66 & 24.06 & 22.60 & \textbf{22.05} & 26.30 & 25.43 & 22.40 & \textbf{18.88} \\
 & & R3D (Uni) & 20.12 & 18.49 & 16.11 & 13.28 & 24.62 & 22.93 & 22.78 & 18.68 \\ 
 \cmidrule(lr){2-11}
& Multi & AFFT \cite{zhong2023anticipative} & 14.13 & 13.38 & 13.49 & 9.53 & 17.01 & 17.26 & 15.18 & 10.91 \\
\rowcolor{Gray} &  & \textbf{R3D (Ours)}& \textbf{32.57} & \textbf{25.92} & \textbf{24.02} & 16.68 & \textbf{26.97} & \textbf{28.43} & \textbf{24.81} & 18.02 \\  
\midrule
\multirow{6}{*}{UTKinects~\citep{xia2012view}}  
& Uni& RNN \cite{abu2018will} & 24.42 & 15.37 & 15.92 & 11.10 & 15.22 & 15.19 & 13.12 & 9.89 \\  
& & CNN \cite{abu2018will} & 25.00 & 16.60 & 14.79 & 11.07 & 15.10 & 16.67 & 14.29 & 8.08 \\ 
& & FUTR \cite{gong2022future} & 29.63 & 20.75 & 20.53 & 16.48 & 16.46 & 16.90 & 14.31 & 9.95 \\ 
& & GTAN \cite{zatsarynna2024gated} & 34.38 & 35.71 & \textbf{39.54} & 34.17 & 28.27 & 32.36 & 27.22 & 21.16 \\
& & R3D (Uni) & 29.63 & 20.75 & 20.53 & 16.48 & 16.46 & 16.90 & 14.31 & 9.95 \\  
\cmidrule(lr){2-11}
& Multi & AFFT \cite{zhong2023anticipative} & 25.00 & 16.67 & 16.67 & 12.50 & 16.67 & 16.67 & 14.29 & 10.00 \\ 
\rowcolor{Gray} & &  \textbf{R3D (Ours)} & \textbf{38.96} & \textbf{37.68} & 37.16 & \textbf{35.95} & \textbf{34.80} & \textbf{32.51} & \textbf{36.03} & \textbf{26.98} \\  
\midrule
\multirow{6}{*}{NTURGBD~\citep{shahroudy2016ntu}}  
&Uni & RNN \cite{abu2018will} 
 & 10.98 & 10.49 & 10.53 & 10.94 & 8.60 & 9.55 & 10.11 & 11.69 \\  
& & CNN \cite{abu2018will}  & 15.11 & 14.91 & 14.33 & 15.12 & 15.09 & 15.49 & 15.52 & 16.74 \\  
& & FUTR \cite{gong2022future}  & 18.59 & 18.56 & 18.61 & 18.61 & 20.04 & 20.13 & 20.07 & 20.11 \\
&  & GTAN \cite{zatsarynna2024gated}  & 18.93 & 18.51 & 18.55 & 18.70 & 21.63 & \textbf{21.72} & \textbf{21.95} & 20.93 \\
& & R3D (Uni) & 10.24 & 10.14 & 10.18 & 10.15 & 10.91 & 10.97 & 10.92 & 10.96 \\  
\cmidrule(lr){2-11}
& Multi & AFFT \cite{zhong2023anticipative} & 20.55 & \textbf{20.76} & \textbf{20.66} & \textbf{20.79} & 20.72 & 20.72 & 20.74 & \textbf{21.27} \\ 
\rowcolor{Gray} & &  \textbf{R3D (Ours)} & \textbf{21.98} & 20.18 & 19.72 & 18.90 & \textbf{23.17} & 21.27 & 20.51 & 19.89 \\  
\bottomrule 
\end{tabular}}
\caption{State-of-the-art comparisons of action anticipation performance across three widely used datasets under varying conditions of $\alpha$ (observation rate) and $\beta$ (prediction rate). ``Uni" denotes Unimodal input and ``Multi" indicates a Multi-modal setting. ``R3D (Uni)" refers to the R3D model trained solely with RGB input, without depth modality. }
\label{tab:main-experiment}
\end{table}

\section{Experiments}
This section presents experimental analyses and discussion of the proposed method. The detailed dataset configuration is described in Section~\ref{lab:datasets}, the experimental setup is detailed in Section~\ref{lab:setup}, and the computational cost and scalability are detailed in Section~\ref{lab:comp-cost-scalability}. For all analyses, we set the observation rate $\alpha \in \{0.2, 0.3\}$ and prediction rate $\beta \in \{0.1, 0.2, 0.3, 0.5\}$, and report mean over classes (MoC) accuracy as the evaluation metric. Qualitative analyses are provided in Section~\ref{lab:qualitative}.
% 1. fusion strategies
% 2. architecture ablations
% 3. impact of temporal context
% 4. modality contributions
% 5. comparison to state-of-the-art

\subsection{Quantitative Analysis}
\noindent\textbf{State-of-the-art comparison.}
As we observe in Table \ref{tab:main-experiment}, \texttt{R3D} consistently outperforms the state-of-the-art across all datasets and evaluation settings, achieving an average accuracy gain of up to 3.74\%. We compare the performance of multiple architectures for action anticipation and observe that our approach achieves the best performance among all models, with AFFT \cite{zhong2023anticipative}, m\&m-Ant \cite{kim2025multi}, FUTR \cite{gong2022future}, and GTAN~\citep{zatsarynna2024gated}, serving as the state-of-the-art baselines. This gain is particularly notable in scenarios with lower observation rates ($\alpha = 0.2$), where depth information allows the model to leverage subtle cues even when visual input is limited. The performance boost is more pronounced on the DARai dataset than on the UTKinects dataset. This is because DARai contains more dense action sequences and more frequent transitions. In this case, depth information enhances the model’s ability to capture nuanced temporal variations, leading to greater improvements compared to other datasets. In contrast, on NTURGBD, the strength of \texttt{R3D} is less pronounced. This can be attributed to the fact that NTURGBD provides high-quality RGB-Depth pairs with well-trimmed and well-curated action segments. While \texttt{R3D} is designed to be robust in less curated, noisy real-world settings (See~\ref{lab:robustness}), its relative advantage diminishes when both modalities are already strong, which is less common in real-world settings.

\begin{table}[t!]
\centering
\scalebox{0.5}{
\renewcommand{\arraystretch}{1.2}
\begin{tabular}{@{}llcccccccc@{}}
\toprule
\multirow{2}{*}{Dataset} & \multirow{2}{*}{Methods} & \multicolumn{4}{c}{$\beta (\alpha=0.2)$} & \multicolumn{4}{c}{$\beta (\alpha=0.3)$} \\  
\cmidrule(lr){3-6} \cmidrule(lr){7-10} 
 &  & 0.1 & 0.2 & 0.3 & 0.5 & 0.1 & 0.2 & 0.3 & 0.5 \\  
\midrule
\multirow{2}{*}{UTKinects}  
 & W/O RTF & 29.63 & 20.75 & 20.53 & 16.48 & 16.46 & 16.90 & 14.31 & 9.95 \\  
 & W/ RTF & \textbf{38.96} & \textbf{37.68} & \textbf{37.16} & \textbf{35.95} & \textbf{34.80} & \textbf{32.51} & \textbf{36.03} & \textbf{26.98} \\ 
\midrule
\multirow{2}{*}{NTURGBD}  
 & W/O RTF & 16.86 & 16.74 & 16.74 & 16.74 & 19.53 & 19.89 & 19.74 & 19.89 \\  
 & W/ RTF & \textbf{21.98} & \textbf{20.18} & \textbf{19.72} & \textbf{18.90} & \textbf{23.17} & \textbf{21.27} & \textbf{20.51} & \textbf{19.89} \\ 
\midrule
\multirow{2}{*}{DARai}  
 & W/O RTF & 31.84 & 31.40 & 28.88 & 27.06 & 35.32 & 31.98 & 32.83 & 31.91 \\  
 & W/ RTF & \textbf{33.44} & \textbf{32.14} & \textbf{30.56} & \textbf{29.59} & \textbf{46.29} & \textbf{42.05} & \textbf{43.41} & \textbf{40.25} \\  
\bottomrule
\end{tabular}}
\caption{Ablation Study evaluating the impact of \texttt{RTF}.}
\label{tab:ablation-token}
\vspace{-0.3cm}
\end{table}

\begin{table}[t!]
\centering
\scalebox{0.5}{
\renewcommand{\arraystretch}{1.2}
\begin{tabular}{@{}llcccccccc@{}}
\toprule
\multirow{2}{*}{Dataset} & \multirow{2}{*}{Methods} &  \multicolumn{4}{c}{$\beta (\alpha=0.2)$} 
& \multicolumn{4}{c}{$\beta (\alpha=0.3)$} \\  
\cmidrule(lr){3-6} \cmidrule(lr){7-10}
 &  & 0.1 & 0.2 & 0.3 & 0.5 & 0.1 & 0.2 & 0.3 & 0.5 \\  
\midrule
\multirow{2}{*}{UTKinects}  
 & Static & \textbf{40.30} & 25.62 & 26.42 & 19.27 & 13.55 & 19.10 & 16.46 & 12.17 \\  
 & Adaptive  & 38.96 & \textbf{37.68} & \textbf{37.16} & \textbf{35.95} &
 \textbf{34.80} &
 \textbf{32.51} &
 \textbf{36.03} &
 \textbf{26.98}\\ 
\midrule
\multirow{2}{*}{NTURGBD}  
 & Static  & 18.93 & 17.93 & 16.90 & 16.77 & 18.48 & 17.83 & 16.65 & 16.30\\  
 & Adaptive & \textbf{21.98} & \textbf{20.18} & \textbf{19.72} & \textbf{18.90} & \textbf{23.17} & \textbf{21.27} & \textbf{20.51} & \textbf{19.89} \\   
\midrule
\multirow{2}{*}{DARai}  
 & Static & 27.86 & 26.50 & 24.62 & 23.39 & 41.10 & 35.35 & 37.00 & 35.01 \\  
 & Adaptive & \textbf{33.44} & \textbf{32.14} & \textbf{30.56} & \textbf{29.59} & \textbf{46.29} & \textbf{42.05} & \textbf{43.41} & \textbf{40.25} \\  
\bottomrule
\end{tabular}}
\caption{Ablation Study on channel exchange method across three datasets. ``Static'' indicates Static Channel Exchange and ``Adaptive'' indicates Adaptive Channel Exchange with learnable scaling factor $\alpha_{\text{I}}$.}
\label{tab:ablation-alphalearning}
\vspace{-0.5cm}
\end{table}
\begin{figure}[t!]
\begin{center}
\includegraphics[width=1\linewidth]{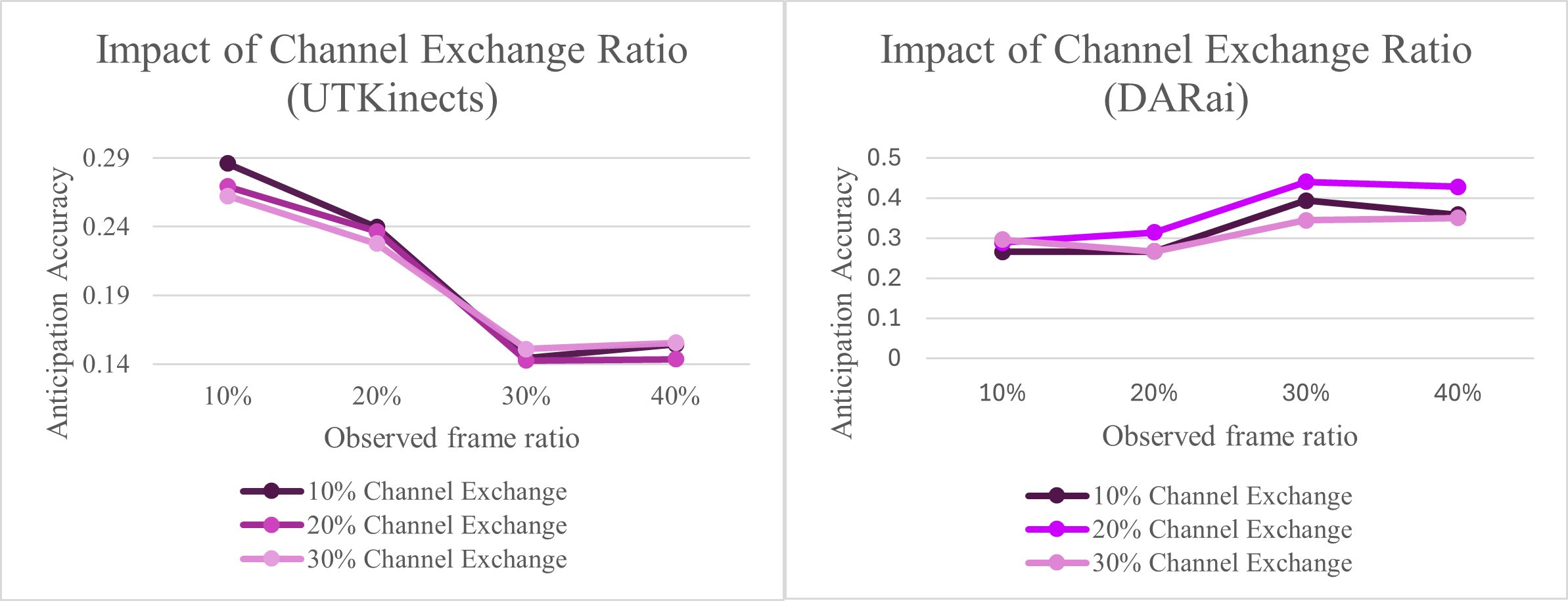}
\end{center}
\vspace{-0.5cm}
   \caption{Ablation study on UTKinects and DARai dataset examining the impact of the proportion of exchanged channels (10\%, 20\%, 30\%) in Token Fuser.}
\label{fig:ablation-channelexchange}
\vspace{-0.3cm}
\end{figure}

\subsection{Ablation study}
We perform ablation studies to assess the impact of depth modality (Table \ref{tab:main-experiment}), impact of Rank-enhacing Token Fuser (Table \ref{tab:ablation-token}), Rank-enhacing Token Fuser modeling (Figure \ref{fig:ablation-channelexchange}, Table \ref{tab:ablation-alphalearning}), and impact of Temporal Fuser (Table \ref{tab:ablation-temporal}). We show the ablation study of Temporal Fuser and additional Rank-enhacing Token Fuser modeling in Section~\ref{sup:ablation-study} and Section~\ref{sup:ablation-latent}, respectively.

\noindent\textbf{RGB only.} First, we evaluate the effect of incorporating depth information by comparing models trained with and without depth features. As shown in Table \ref{tab:main-experiment}, integrating depth consistently improves performance across all datasets, demonstrating its importance for capturing fine-grained spatial relationships and motion directionality.

\noindent\textbf{RGB-Depth aggregation without Rank-enhancement.} Next, we analyze the impact of the Token Fuser by removing it from \texttt{R3D}. The results in Table \ref{tab:ablation-token} show that models without Rank-enhancing Token Fuser exhibit lower performance, confirming its effectiveness in facilitating cross-modal information exchange and improving representation quality.

\noindent\textbf{Role of adaptive $\alpha$ during fusion.} To further investigate Rank-enhancing Token Fuser modeling, we compare static channel exchange with adaptive channel exchange method as shown in Table \ref{tab:ablation-alphalearning}. Across all datasets, adaptive blending yields better performance, as action anticipation task exhibits high variability in object interactions and spatial configurations, requiring a flexible fusion mechanism.

\noindent\textbf{Impact of varying channel exchange ratio.} Additionally, we conduct an ablation study on the optimal channel exchange ratio for Token Fuser as shown in Figure \ref{fig:ablation-channelexchange}. Our findings reveal that the optimal exchange ratio varies by dataset: UTKinect achieves the highest performance at 10\% channel exchange, while DARai performs best at 20\% exchange. This suggests that datasets with more structured, repetitive actions (e.g., UTKinect) benefit from lower exchange rates, whereas datasets with more complex, unstructured activities (e.g., DARai) require a higher degree of adaptive feature replacement.

\subsection{More SOTA comparisons}
\begin{table}[t!]
\centering
\scalebox{0.5}{
\renewcommand{\arraystretch}{1.2}
\begin{tabular}{@{}llcccccccc@{}}
\toprule
\multirow{2}{*}{Modality pair} & \multirow{2}{*}{Methods} & \multicolumn{4}{c}{$\beta (\alpha=0.2)$} & \multicolumn{4}{c}{$\beta (\alpha=0.3)$} \\  
\cmidrule(lr){3-6} \cmidrule(lr){7-10} 
 &  & 0.1 & 0.2 & 0.3 & 0.5 & 0.1 & 0.2 & 0.3 & 0.5 \\  
\midrule
\multirow{2}{*}{RGB - Depth} & 
AFFT~\citep{zhong2023anticipative} & 14.13 & 13.38 & 13.49 & 9.53 & 17.01 & 17.26 & 15.18 & 10.91 \\
 & \textbf{R3D (Ours)} & \textbf{32.57} & \textbf{25.92} & \textbf{24.02} & \textbf{16.68} & \textbf{26.97} & \textbf{28.43} & \textbf{24.81} & \textbf{18.02} \\ 
\midrule
\multirow{2}{*}{Different-view RGBs} & 
AFFT~\citep{zhong2023anticipative} & 11.53 & 8.46 & 7.47 & 5.84 & 10.61 & 8.51 & 8.22 & 6.88 \\  
 &  \textbf{R3D (Ours)} & \textbf{26.99} & \textbf{23.80} & \textbf{20.73} & \textbf{21.50} & \textbf{35.02} & \textbf{27.37} & \textbf{27.19} & \textbf{22.62} \\
 \midrule
\multirow{2}{*}{RGB - IMU} & 
AFFT~\citep{zhong2023anticipative} & 11.89 & \textbf{10.04} & \textbf{8.81} & \textbf{7.01} & \textbf{11.31} & \textbf{9.72} & \textbf{9.64} & \textbf{7.59} \\  
 
 & \textbf{R3D (Ours)} & \textbf{12.08} & 9.76 & 7.87 & 5.80 & 10.61 & 8.46 & 7.54 & 5.75 \\  
\midrule
\multirow{2}{*}{RGB - Text} & 
AFFT~\citep{zhong2023anticipative} & 20.21 & 21.83 & 21.93 & 17.07 & 25.79 & 25.62 & 24.90 & 19.62 \\
 &\textbf{R3D (Ours)} & \textbf{30.24} & \textbf{28.25} & \textbf{22.80} & \textbf{18.01} & \textbf{28.17} & \textbf{27.69} & \textbf{25.69} & \textbf{22.93} \\ 
\bottomrule 
\end{tabular}}
\caption{State-of-the-art comparisons of action anticipation performance on DARai dataset across four different modalities: Depth, Multi-view RGB, IMU, and Text.}
\label{tab:ablation-modality}

\end{table}

\noindent \textbf{More modality pairs.} 
To further validate the general effectiveness of \texttt{R3D} with different modalities, as shown in Table~\ref{tab:ablation-modality}, we additionally present the State-of-the-art comparisons of action anticipation performance across four different modality pairs: Depth, Multi-view RGB, IMU, and Text at varying observation rates ($\alpha$) and prediction rates ($\beta$). For comparison, we use AFFT~\citep{zhong2023anticipative}, the current state-of-the-art method in multi-modal action anticipation. To ensure fairness and reproducibility, we conduct three runs with fixed random seeds (1, 10, 13452) and simply replace the input from the Figure~\ref{fig:architecture}. Overall, \texttt{R3D} achieves superior performance to AFFT for modality pairs with higher harmonic means as illustrated in Figure~\ref{fig:modality-comparison}. For RGB - IMU pair, \texttt{R3D} underperforms compared to AFFT. We attribute this to the lack of sufficiently balanced interaction between RGB and IMU features, as also reflected in Figure~\ref{fig:modality-comparison}. We expect that tailoring the fusion parameters for RGB - IMU would enhance the interaction and further improve performance.

\begin{table}[t!]
\centering
\scalebox{0.53}{
\renewcommand{\arraystretch}{1.2}
\begin{tabular}{@{}llcc@{}}
\toprule
\multirow{2}{*}{Dataset} & \multirow{2}{*}{Methods} & \multicolumn{1}{c}{Frame-wise metrics} & \multicolumn{1}{c}{Segment-wise metrics} \\  
\cmidrule(lr){3-3} \cmidrule(lr){4-4} 
 &  & Accuracy & Edit score \\
\midrule
\multirow{2}{*}{\shortstack{DARai~\citep{kaviani2025hierarchical} \\ (Coarse)}}  
&  ActFusion \cite{gong2024actfusion} & 31.38 & 26.62  \\
&  \textbf{R3D (Ours)} & \textbf{33.32} & \textbf{31.21} \\  
 \midrule
\multirow{2}{*}{\shortstack{DARai~\citep{kaviani2025hierarchical} \\ (Fine-grained)}} 
&  ActFusion \cite{gong2024actfusion} & 17.83 & 18.16 \\
& \textbf{R3D (Ours)} & \textbf{20.79} & \textbf{28.98} \\  
 \midrule
\multirow{2}{*}{NTURGBD~\citep{shahroudy2016ntu}}
& ActFusion \cite{gong2024actfusion}& 13.26 & 6.00 \\
& \textbf{R3D (Ours)} & \textbf{13.26} & \textbf{8.93} \\  
\bottomrule
\end{tabular}}
\caption{State-of-the-art comparisons of action segmentation performance across multiple datasets.}
\label{tab:action-recognition}
\end{table}
\noindent \textbf{More downstream tasks.} 
To further validate the general effectiveness of \texttt{RTF}, we additionally apply our method to Action Segmentation task. Unlike Action Anticipation, Action Segmentation task segments an input video into temporally contiguous segments, assigning a categorical action label to each segment independently~\citep{lea2017temporal, ishikawa2021alleviating, farha2019ms, wang2020boundary, yi2021asformer, xu2022don, behrmann2022unified, liu2023diffusion, gong2024actfusion}. To comprehensively evaluate the task, first, we assess the correctness of predictions at every individual frame by reporting frame-wise accuracy~\citep{lea2017temporal, ishikawa2021alleviating, wang2020boundary, xu2022don, behrmann2022unified, gong2024actfusion}. Additionally, we also measure how well the predicted segments align with the true temporal boundaries and labels of actions by measuring an edit score, a segment-wise metrics~\citep{farha2019ms, yi2021asformer, liu2023diffusion, gong2024actfusion}. To obtain results from the action segmentation task, we use the output of the Segmentation Head of the Temporal Fuser of \texttt{R3D}, illustrated in Figure~\ref{fig:architecture} and explained in detail in Section~\ref{lab:temporal-fuser}. To ensure consistency, we report the average performance over three runs with fixed random seeds (1, 10, 13452).

As shown in Table~\ref{tab:action-recognition}, \texttt{R3D} consistently outperforms the state-of-the-art across different levels of granularity. The performance gap is more pronounced on the fine-grained setting, since the finer temporal boundaries and larger number of action segments make boundary alignment substantially more challenging, amplifying the advantage of \texttt{R3D} in capturing temporal structure and maintaining segment consistency. It is worth noting that \texttt{R3D} is not originally designed for action segmentation task, yet it still demonstrates strong performance on this task.

\subsection{Robustness}
\label{lab:robustness}

\begin{figure}[t!]
\begin{center}
\includegraphics[width=1\linewidth]{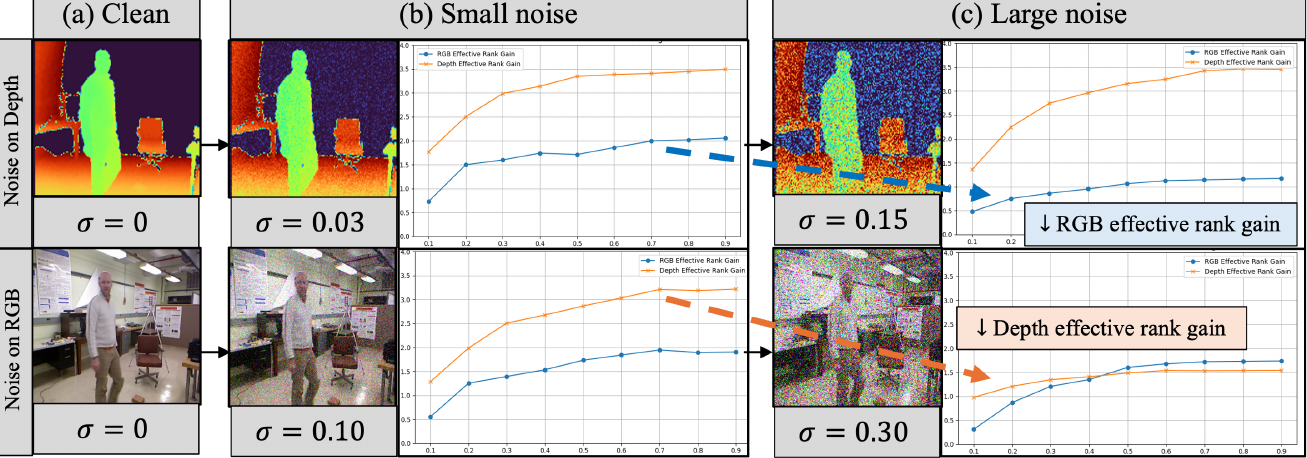}
\end{center}
\vspace{-0.5cm}
   \caption{Effective rank gain under various noise perturbations (Depth: (b) small $\sigma$=0.03, (c) large $\sigma$=0.15, (b) RGB: small $\sigma$=0.10, (c) large $\sigma$=0.30). When more noise is added to one modality, the noisy modality's effective rank gain remains stable, whereas the other (clean) modality's gain diminishes. For example, increasing noise in Depth keeps its gain stable but reduces RGB's. This shows that fusion adaptively relies more on the cleaner modality.}
\label{fig:noise-erank}
\vspace{-0.3cm}
\end{figure}

\begin{figure}[t!]
\begin{center}
\includegraphics[width=0.8\linewidth]{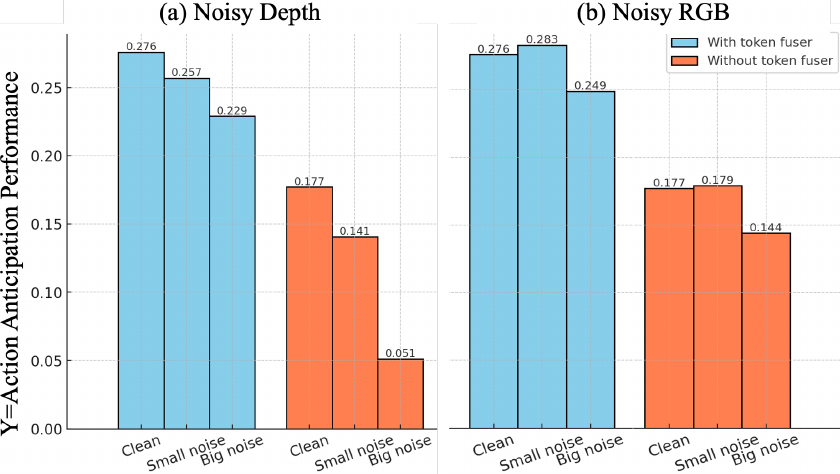}
\end{center}
\vspace{-0.5cm}
   \caption{This shows an action anticipation performance under varying noise levels (clean, small noise, large noise) in two settings: with Rank-Enhancing Token Fuser (RTF) and without RTF. Notably, (a) when noise is added to Depth, performance without RTF degrades substantially, while with RTF maintains stability, compared to (b) when noise is added to RGB.}
\label{fig:noise-performance}
\vspace{-0.7cm}
\end{figure}

In this section, we simulate the real world scenario where one modality can be unreliable or noisy. Figure~\ref{fig:noise-erank} illustrates two cases: noise added to the Depth modality (top) and to the RGB (bottom). We assess how much each modality contributes to fusion by measuring the effective rank gain across three conditions: clean (Figure~\ref{fig:noise-erank} (a)), small noise (Figure~\ref{fig:noise-erank} (b)), and large noise (Figure~\ref{fig:noise-erank} (c)). To ensure fairness, we calibrate the noise level $\sigma$ separately for Depth and RGB so that perturbations induce the comparable change in each modality's representation space.

Figure~\ref{fig:noise-performance} (a) shows that without \texttt{RTF}, the performance drops sharply when Depth is noisy, whereas with \texttt{RTF}, the performance remains stable under both noisy Depth and noisy RGB. Figure~\ref{fig:noise-erank} shows the mechanism of how \texttt{RTF} works. As noise increases (Figure~\ref{fig:noise-erank} (b) $\rightarrow$ (c)) in one modality, \texttt{RTF} adaptively shifts reliance toward the cleaner modality. For example, when Depth is corrupted, Depth's effective rank gain remains stable whereas RGB's gain diminishes (Figure~\ref{fig:noise-erank} (b) $\rightarrow$ (c), Top part), indicating that the fusion compensates by down-weighting the unreliable Depth contribution and leveraging the complementary information from RGB, so that the overall representation remains robust despite the corruption.

\section{Conclusion}
\label{sec:conclusion}

We present \texttt{R3D}, a depth-informed framework for action anticipation that addresses two representation collapse in multi-modal fusion. The utility of effective rank as a computational means to counter both feature and modality collapse jointly is emphasized.  We introduce \texttt{RTF}, a theoretically grounded mechanism that selectively blends \textit{less informative} channels with complementary signals from another modality, provably increasing representational diversity. The principle of mutual effective rank gain, whereby the effective rank of both modalities increases simultaneously is highlighted. For the application of action anticipation, depth adds background context and directionality to the object-centric visual cues from RGB. \texttt{R3D} is validated extensively across NTURGBD, UTKinect, and DARai. Our method not only outperforms state-of-the-art approaches but also provides new insights into what constitutes effective and balanced multi-modal fusion.

% In conclusion, we introduce \texttt{R3D}, the first depth-informed framework for 3D action anticipation, addressing the limitations of RGB-based approaches in capturing fine-grained spatial semantics.
% By leveraging depth cues, \texttt{R3D} enhances spatial reasoning and motion understanding, complementing RGB appearance and motion features.
% To achieve effective fusion, we proposed Token Fuser to ensure context-aware multi-modal integration, and Temporal Fuser to prioritize different modalities at different action stages.
% Extensive experiments on NTURGBD, UTKinect, and DARai demonstrate that \texttt{R3D} outperforms state-of-the-art methods. These results emphasize the importance of depth-aware spatial reasoning and adaptive fusion for more accurate action prediction.
%\input{sec/8_Acknowledgement}
{
    \small
    \bibliographystyle{ieeenat_fullname}
    \bibliography{main}

\begin{thebibliography}{76}
\providecommand{\natexlab}[1]{#1}
\providecommand{\url}[1]{\texttt{#1}}
\expandafter\ifx\csname urlstyle\endcsname\relax
  \providecommand{\doi}[1]{doi: #1}\else
  \providecommand{\doi}{doi: \begingroup \urlstyle{rm}\Url}\fi

\bibitem[Abu~Farha et~al.(2018)Abu~Farha, Richard, and Gall]{abu2018will}
Yazan Abu~Farha, Alexander Richard, and Juergen Gall.
\newblock When will you do what?-anticipating temporal occurrences of activities.
\newblock In \emph{CVPR}, pages 5343--5352, 2018.

\bibitem[Abu~Farha et~al.(2021)Abu~Farha, Ke, Schiele, and Gall]{abu2021long}
Yazan Abu~Farha, Qiuhong Ke, Bernt Schiele, and Juergen Gall.
\newblock Long-term anticipation of activities with cycle consistency.
\newblock In \emph{PR}, pages 159--173. Springer, 2021.

\bibitem[Agrawal et~al.(2022)Agrawal, Mondal, Ghosh, and Richards]{agrawal2022alpha}
Kumar~K Agrawal, Arnab~Kumar Mondal, Arna Ghosh, and Blake Richards.
\newblock alpha-req: Assessing representation quality in self-supervised learning by measuring eigenspectrum decay.
\newblock \emph{Advances in Neural Information Processing Systems}, 35:\penalty0 17626--17638, 2022.

\bibitem[Akkus et~al.(2023)Akkus, Chu, Djakovic, Jauch-Walser, Koch, Loss, Marquardt, Moldovan, Sauter, Schneider, et~al.]{akkus2023multimodal}
Cem Akkus, Luyang Chu, Vladana Djakovic, Steffen Jauch-Walser, Philipp Koch, Giacomo Loss, Christopher Marquardt, Marco Moldovan, Nadja Sauter, Maximilian Schneider, et~al.
\newblock Multimodal deep learning.
\newblock \emph{arXiv preprint arXiv:2301.04856}, 2023.

\bibitem[Baltru{\v{s}}aitis et~al.(2018)Baltru{\v{s}}aitis, Ahuja, and Morency]{baltruvsaitis2018multimodal}
Tadas Baltru{\v{s}}aitis, Chaitanya Ahuja, and Louis-Philippe Morency.
\newblock Multimodal machine learning: A survey and taxonomy.
\newblock \emph{IEEE transactions on pattern analysis and machine intelligence}, 41\penalty0 (2):\penalty0 423--443, 2018.

\bibitem[Bardes et~al.(2021)Bardes, Ponce, and LeCun]{bardes2021vicreg}
Adrien Bardes, Jean Ponce, and Yann LeCun.
\newblock Vicreg: Variance-invariance-covariance regularization for self-supervised learning.
\newblock \emph{arXiv preprint arXiv:2105.04906}, 2021.

\bibitem[Barsoum et~al.(2020)Barsoum, Kender, and Liu]{barsoum20203d}
Emad Barsoum, John Kender, and Zicheng Liu.
\newblock 3d human motion anticipation and classification.
\newblock \emph{arXiv preprint arXiv:2012.15378}, 2020.

\bibitem[Beedu et~al.(2024)Beedu, Haresamudram, Samel, and Essa]{beedu2024efficacy}
Apoorva Beedu, Harish Haresamudram, Karan Samel, and Irfan Essa.
\newblock On the efficacy of text-based input modalities for action anticipation.
\newblock \emph{arXiv preprint arXiv:2401.12972}, 2024.

\bibitem[Behrmann et~al.(2022)Behrmann, Golestaneh, Kolter, Gall, and Noroozi]{behrmann2022unified}
Nadine Behrmann, S~Alireza Golestaneh, Zico Kolter, Juergen Gall, and Mehdi Noroozi.
\newblock Unified fully and timestamp supervised temporal action segmentation via sequence to sequence translation.
\newblock In \emph{European conference on computer vision}, pages 52--68. Springer, 2022.

\bibitem[Benkert et~al.(2024)Benkert, Prabhushankar, and AlRegib]{benkert2024transitional}
Ryan Benkert, Mohit Prabhushankar, and Ghassan AlRegib.
\newblock Transitional uncertainty with layered intermediate predictions.
\newblock \emph{arXiv preprint arXiv:2405.17494}, 2024.

\bibitem[Cao et~al.(2022)Cao, Li, and Zhong]{cao2022qmednet}
Wenming Cao, Shuangshuang Li, and Jianqi Zhong.
\newblock Qmednet: A quaternion-based multi-order differential encoder--decoder model for 3d human motion prediction.
\newblock \emph{Neural Networks}, 154:\penalty0 141--151, 2022.

\bibitem[Chaudhuri et~al.(2025)Chaudhuri, Dutta, Bui, and Georgescu]{chaudhuri2025closer}
Abhra Chaudhuri, Anjan Dutta, Tu Bui, and Serban Georgescu.
\newblock A closer look at multimodal representation collapse.
\newblock \emph{arXiv preprint arXiv:2505.22483}, 2025.

\bibitem[Chen et~al.(2024)Chen, Wu, Wang, Su, Chen, Xing, Zhong, Zhang, Zhu, Lu, et~al.]{chen2024internvl}
Zhe Chen, Jiannan Wu, Wenhai Wang, Weijie Su, Guo Chen, Sen Xing, Muyan Zhong, Qinglong Zhang, Xizhou Zhu, Lewei Lu, et~al.
\newblock Internvl: Scaling up vision foundation models and aligning for generic visual-linguistic tasks.
\newblock In \emph{Proceedings of the IEEE/CVF conference on computer vision and pattern recognition}, pages 24185--24198, 2024.

\bibitem[Damen et~al.(2022)Damen, Doughty, Farinella, Furnari, Kazakos, Ma, Moltisanti, Munro, Perrett, Price, et~al.]{damen2022rescaling}
Dima Damen, Hazel Doughty, Giovanni~Maria Farinella, Antonino Furnari, Evangelos Kazakos, Jian Ma, Davide Moltisanti, Jonathan Munro, Toby Perrett, Will Price, et~al.
\newblock Rescaling egocentric vision: Collection, pipeline and challenges for epic-kitchens-100.
\newblock \emph{IJCV}, pages 1--23, 2022.

\bibitem[Duan et~al.(2022)Duan, Zhao, Chen, Lin, and Dai]{duan2022revisiting}
Haodong Duan, Yue Zhao, Kai Chen, Dahua Lin, and Bo Dai.
\newblock Revisiting skeleton-based action recognition.
\newblock In \emph{Proceedings of the IEEE/CVF conference on computer vision and pattern recognition}, pages 2969--2978, 2022.

\bibitem[Farha and Gall(2019)]{farha2019ms}
Yazan~Abu Farha and Jurgen Gall.
\newblock Ms-tcn: Multi-stage temporal convolutional network for action segmentation.
\newblock In \emph{CVPR}, pages 3575--3584, 2019.

\bibitem[Fernando and Herath(2021)]{fernando2021anticipating}
Basura Fernando and Samitha Herath.
\newblock Anticipating human actions by correlating past with the future with jaccard similarity measures.
\newblock In \emph{CVPR}, pages 13224--13233, 2021.

\bibitem[Fu et~al.(2021)Fu, Yan, Liao, Zhou, Tang, and Xiao]{fu2021seamless}
Yanping Fu, Qingan Yan, Jie Liao, Huajian Zhou, Jin Tang, and Chunxia Xiao.
\newblock Seamless texture optimization for rgb-d reconstruction.
\newblock \emph{IEEE Transactions on Visualization and Computer Graphics}, 29\penalty0 (3):\penalty0 1845--1859, 2021.

\bibitem[Furnari and Farinella(2019)]{furnari2019would}
Antonino Furnari and Giovanni~Maria Farinella.
\newblock What would you expect? anticipating egocentric actions with rolling-unrolling lstms and modality attention.
\newblock In \emph{ICCV}, pages 6252--6261, 2019.

\bibitem[Garrido et~al.(2023)Garrido, Balestriero, Najman, and Lecun]{garrido2023rankme}
Quentin Garrido, Randall Balestriero, Laurent Najman, and Yann Lecun.
\newblock Rankme: Assessing the downstream performance of pretrained self-supervised representations by their rank.
\newblock In \emph{International conference on machine learning}, pages 10929--10974. PMLR, 2023.

\bibitem[Girdhar et~al.(2023)Girdhar, El-Nouby, Liu, Singh, Alwala, Joulin, and Misra]{girdhar2023imagebind}
Rohit Girdhar, Alaaeldin El-Nouby, Zhuang Liu, Mannat Singh, Kalyan~Vasudev Alwala, Armand Joulin, and Ishan Misra.
\newblock Imagebind: One embedding space to bind them all.
\newblock In \emph{Proceedings of the IEEE/CVF conference on computer vision and pattern recognition}, pages 15180--15190, 2023.

\bibitem[Gong et~al.(2022)Gong, Lee, Kim, Ha, and Cho]{gong2022future}
Dayoung Gong, Joonseok Lee, Manjin Kim, Seong~Jong Ha, and Minsu Cho.
\newblock Future transformer for long-term action anticipation.
\newblock In \emph{CVPR}, pages 3052--3061, 2022.

\bibitem[Gong et~al.(2024)Gong, Kwak, and Cho]{gong2024actfusion}
Dayoung Gong, Suha Kwak, and Minsu Cho.
\newblock Actfusion: a unified diffusion model for action segmentation and anticipation.
\newblock \emph{arXiv:2412.04353}, 2024.

\bibitem[Grauman et~al.(2022)Grauman, Westbury, Byrne, Chavis, Furnari, Girdhar, Hamburger, Jiang, Liu, Liu, et~al.]{grauman2022ego4d}
Kristen Grauman, Andrew Westbury, Eugene Byrne, Zachary Chavis, Antonino Furnari, Rohit Girdhar, Jackson Hamburger, Hao Jiang, Miao Liu, Xingyu Liu, et~al.
\newblock Ego4d: Around the world in 3,000 hours of egocentric video.
\newblock In \emph{CVPR}, pages 18995--19012, 2022.

\bibitem[Guan et~al.(2022)Guan, Lu, Zhu, and Fang]{guan2022afe}
Shannan Guan, Haiyan Lu, Linchao Zhu, and Gengfa Fang.
\newblock Afe-cnn: 3d skeleton-based action recognition with action feature enhancement.
\newblock \emph{Neurocomputing}, 514:\penalty0 256--267, 2022.

\bibitem[Hazirbas et~al.(2016)Hazirbas, Ma, Domokos, and Cremers]{hazirbas2016fusenet}
Caner Hazirbas, Lingni Ma, Csaba Domokos, and Daniel Cremers.
\newblock Fusenet: Incorporating depth into semantic segmentation via fusion-based cnn architecture.
\newblock In \emph{Asian conference on computer vision}, pages 213--228. Springer, 2016.

\bibitem[Hu and Singh(2021)]{hu2021unit}
Ronghang Hu and Amanpreet Singh.
\newblock Unit: Multimodal multitask learning with a unified transformer.
\newblock In \emph{Proceedings of the IEEE/CVF international conference on computer vision}, pages 1439--1449, 2021.

\bibitem[Huang(2024)]{huang2024multi}
Jia-Hong Huang.
\newblock Multi-modal video summarization.
\newblock In \emph{Proceedings of the 2024 International Conference on Multimedia Retrieval}, pages 1214--1218, 2024.

\bibitem[Huang et~al.(2020)Huang, Zeng, Liu, Fu, and Fu]{huang2020pixel}
Zhicheng Huang, Zhaoyang Zeng, Bei Liu, Dongmei Fu, and Jianlong Fu.
\newblock Pixel-bert: Aligning image pixels with text by deep multi-modal transformers.
\newblock \emph{arXiv preprint arXiv:2004.00849}, 2020.

\bibitem[Ishikawa et~al.(2021)Ishikawa, Kasai, Aoki, and Kataoka]{ishikawa2021alleviating}
Yuchi Ishikawa, Seito Kasai, Yoshimitsu Aoki, and Hirokatsu Kataoka.
\newblock Alleviating over-segmentation errors by detecting action boundaries.
\newblock In \emph{Proceedings of the IEEE/CVF winter conference on applications of computer vision}, pages 2322--2331, 2021.

\bibitem[Jing et~al.(2021)Jing, Vincent, LeCun, and Tian]{jing2021understanding}
Li Jing, Pascal Vincent, Yann LeCun, and Yuandong Tian.
\newblock Understanding dimensional collapse in contrastive self-supervised learning.
\newblock \emph{arXiv preprint arXiv:2110.09348}, 2021.

\bibitem[Kaviani et~al.(2025{\natexlab{a}})Kaviani, Yarici, Kim, Prabhushankar, AlRegib, Solh, and Patil]{kaviani2025hierarchical}
Ghazal Kaviani, Yavuz Yarici, Seulgi Kim, Mohit Prabhushankar, Ghassan AlRegib, Mashhour Solh, and Ameya Patil.
\newblock Hierarchical and multimodal data for daily activity understanding.
\newblock \emph{arXiv preprint arXiv:2504.17696}, 2025{\natexlab{a}}.

\bibitem[Kaviani et~al.(2025{\natexlab{b}})Kaviani, Yarici, Kim, Prabshankar, and AlRegib]{DARaiDataset}
Ghazal Kaviani, Yavuz Yarici, Seulgi Kim, Mohit Prabshankar, and Ghassan AlRegib.
\newblock Exploring human daily life through a hierarchical multimodal lens, 2025{\natexlab{b}}.

\bibitem[Ke et~al.(2019)Ke, Fritz, and Schiele]{ke2019time}
Qiuhong Ke, Mario Fritz, and Bernt Schiele.
\newblock Time-conditioned action anticipation in one shot.
\newblock In \emph{CVPR}, pages 9925--9934, 2019.

\bibitem[Kim et~al.(2023)Kim, Kang, Hwang, Shin, and Rhee]{kim2023vne}
Jaeill Kim, Suhyun Kang, Duhun Hwang, Jungwook Shin, and Wonjong Rhee.
\newblock Vne: An effective method for improving deep representation by manipulating eigenvalue distribution.
\newblock In \emph{Proceedings of the IEEE/CVF Conference on Computer Vision and Pattern Recognition}, pages 3799--3810, 2023.

\bibitem[Kim et~al.(2025)Kim, Kaviani, Prabhushankar, and AlRegib]{kim2025multi}
Seulgi Kim, Ghazal Kaviani, Mohit Prabhushankar, and Ghassan AlRegib.
\newblock Multi-level and multi-modal action anticipation.
\newblock \emph{arXiv preprint arXiv:2506.02382}, 2025.

\bibitem[Kim et~al.(2021)Kim, Son, and Kim]{kim2021vilt}
Wonjae Kim, Bokyung Son, and Ildoo Kim.
\newblock Vilt: Vision-and-language transformer without convolution or region supervision.
\newblock In \emph{International conference on machine learning}, pages 5583--5594. PMLR, 2021.

\bibitem[Kokilepersaud et~al.(2023)Kokilepersaud, Logan, Benkert, Zhou, Prabhushankar, AlRegib, Corona, Singh, and Parchami]{kokilepersaud2023focal}
Kiran Kokilepersaud, Yash-Yee Logan, Ryan Benkert, Chen Zhou, Mohit Prabhushankar, Ghassan AlRegib, Enrique Corona, Kunjan Singh, and Mostafa Parchami.
\newblock Focal: A cost-aware video dataset for active learning.
\newblock In \emph{2023 IEEE International Conference on Big Data (BigData)}, pages 1269--1278. IEEE, 2023.

\bibitem[Kokilepersaud et~al.(2024)Kokilepersaud, Kim, Prabhushankar, and AlRegib]{kokilepersaud2024hex}
Kiran Kokilepersaud, Seulgi Kim, Mohit Prabhushankar, and Ghassan AlRegib.
\newblock Hex: Hierarchical emergence exploitation in self-supervised algorithms.
\newblock \emph{arXiv preprint arXiv:2410.23200}, 2024.

\bibitem[Kokilepersaud et~al.(2025)Kokilepersaud, Prabhushankar, and AlRegib]{kokilepersaud2025adadim}
Kiran Kokilepersaud, Mohit Prabhushankar, and Ghassan AlRegib.
\newblock Adadim: Dimensionality adaptation for ssl representational dynamics.
\newblock \emph{arXiv preprint arXiv:2505.12576}, 2025.

\bibitem[Koppula and Saxena(2013)]{pmlr-v28-koppula13}
Hema Koppula and Ashutosh Saxena.
\newblock Learning spatio-temporal structure from rgb-d videos for human activity detection and anticipation.
\newblock In \emph{Proceedings of the 30th International Conference on Machine Learning}, pages 792--800, Atlanta, Georgia, USA, 2013. PMLR.

\bibitem[Kuehne et~al.(2014)Kuehne, Arslan, and Serre]{kuehne2014language}
Hilde Kuehne, Ali Arslan, and Thomas Serre.
\newblock The language of actions: Recovering the syntax and semantics of goal-directed human activities.
\newblock In \emph{CVPR}, 2014.

\bibitem[Lai et~al.(2024)Lai, Toyer, Nagarajan, Girdhar, Zha, Rehg, Kitani, Grauman, Desai, and Liu]{lai2024human}
Bolin Lai, Sam Toyer, Tushar Nagarajan, Rohit Girdhar, Shengxin Zha, James~M Rehg, Kris Kitani, Kristen Grauman, Ruta Desai, and Miao Liu.
\newblock Human action anticipation: A survey.
\newblock \emph{arXiv preprint arXiv:2410.14045}, 2024.

\bibitem[Laurent et~al.(2023)Laurent, von Brecht, and Bresson]{laurent2023feature}
Thomas Laurent, James~H von Brecht, and Xavier Bresson.
\newblock Feature collapse.
\newblock \emph{arXiv preprint arXiv:2305.16162}, 2023.

\bibitem[Lea et~al.(2017)Lea, Flynn, Vidal, Reiter, and Hager]{lea2017temporal}
Colin Lea, Michael~D Flynn, Rene Vidal, Austin Reiter, and Gregory~D Hager.
\newblock Temporal convolutional networks for action segmentation and detection.
\newblock In \emph{proceedings of the IEEE Conference on Computer Vision and Pattern Recognition}, pages 156--165, 2017.

\bibitem[Li et~al.(2018)Li, Liu, and Rehg]{li2018eye}
Yin Li, Miao Liu, and James~M Rehg.
\newblock In the eye of beholder: Joint learning of gaze and actions in first person video.
\newblock In \emph{ECCV}, pages 619--635, 2018.

\bibitem[Liu et~al.(2023)Liu, Li, Dinh, Jiang, Shah, and Xu]{liu2023diffusion}
Daochang Liu, Qiyue Li, Anh-Dung Dinh, Tingting Jiang, Mubarak Shah, and Chang Xu.
\newblock Diffusion action segmentation.
\newblock In \emph{Proceedings of the IEEE/CVF international conference on computer vision}, pages 10139--10149, 2023.

\bibitem[Liu et~al.(2025)Liu, Fu, Wang, and Zheng]{liu2025tacfn}
Feng Liu, Ziwang Fu, Yunlong Wang, and Qijian Zheng.
\newblock Tacfn: transformer-based adaptive cross-modal fusion network for multimodal emotion recognition.
\newblock \emph{arXiv preprint arXiv:2505.06536}, 2025.

\bibitem[Loshchilov(2017)]{loshchilov2017decoupled}
Ilya Loshchilov.
\newblock Decoupled weight decay regularization.
\newblock \emph{arXiv preprint arXiv:1711.05101}, 2017.

\bibitem[Loshchilov and Hutter(2016)]{loshchilov2016sgdr}
Ilya Loshchilov and Frank Hutter.
\newblock Sgdr: Stochastic gradient descent with warm restarts.
\newblock \emph{arXiv preprint arXiv:1608.03983}, 2016.

\bibitem[Marshall et~al.(1979)Marshall, Olkin, and Arnold]{marshall1979inequalities}
Albert~W Marshall, Ingram Olkin, and Barry~C Arnold.
\newblock Inequalities: theory of majorization and its applications.
\newblock \emph{Spring Series in Statistics}, 1979.

\bibitem[O’Rourke et~al.(2023)O’Rourke, Vu, and Wang]{o2023matrices}
Sean O’Rourke, Van Vu, and Ke Wang.
\newblock Matrices with gaussian noise: Optimal estimates for singular subspace perturbation.
\newblock \emph{IEEE Transactions on Information Theory}, 70\penalty0 (3):\penalty0 1978--2002, 2023.

\bibitem[Prabhushankar et~al.(2022)Prabhushankar, Kokilepersaud, Logan, Trejo~Corona, AlRegib, and Wykoff]{prabhushankar2022olives}
Mohit Prabhushankar, Kiran Kokilepersaud, Yash-yee Logan, Stephanie Trejo~Corona, Ghassan AlRegib, and Charles Wykoff.
\newblock Olives dataset: Ophthalmic labels for investigating visual eye semantics.
\newblock \emph{Advances in Neural Information Processing Systems}, 35:\penalty0 9201--9216, 2022.

\bibitem[Radford et~al.(2021)Radford, Kim, Hallacy, Ramesh, Goh, Agarwal, Sastry, Askell, Mishkin, Clark, et~al.]{radford2021learning}
Alec Radford, Jong~Wook Kim, Chris Hallacy, Aditya Ramesh, Gabriel Goh, Sandhini Agarwal, Girish Sastry, Amanda Askell, Pamela Mishkin, Jack Clark, et~al.
\newblock Learning transferable visual models from natural language supervision.
\newblock In \emph{International conference on machine learning}, pages 8748--8763. PmLR, 2021.

\bibitem[R{\'e}nyi(1961)]{renyi1961measures}
Alfr{\'e}d R{\'e}nyi.
\newblock On measures of entropy and information.
\newblock In \emph{Proceedings of the fourth Berkeley symposium on mathematical statistics and probability, volume 1: contributions to the theory of statistics}, pages 547--562. University of California Press, 1961.

\bibitem[Rotondo et~al.(2019)Rotondo, Farinella, Tomaselli, and Battiato]{rotondo2019action}
Tiziana Rotondo, Giovanni~Maria Farinella, Valeria Tomaselli, and Sebastiano Battiato.
\newblock Action anticipation from multimodal data.
\newblock In \emph{VISIGRAPP (4: VISAPP)}, pages 154--161, 2019.

\bibitem[Roy and Vetterli(2007)]{roy2007effective}
Olivier Roy and Martin Vetterli.
\newblock The effective rank: A measure of effective dimensionality.
\newblock In \emph{2007 15th European signal processing conference}, pages 606--610. IEEE, 2007.

\bibitem[Sener et~al.(2020)Sener, Singhania, and Yao]{sener2020temporal}
Fadime Sener, Dipika Singhania, and Angela Yao.
\newblock Temporal aggregate representations for long-range video understanding.
\newblock In \emph{ECCV}. Springer, 2020.

\bibitem[Shahroudy et~al.(2016)Shahroudy, Liu, Ng, and Wang]{shahroudy2016ntu}
Amir Shahroudy, Jun Liu, Tian-Tsong Ng, and Gang Wang.
\newblock Ntu rgb+ d: A large scale dataset for 3d human activity analysis.
\newblock In \emph{Proceedings of the IEEE conference on computer vision and pattern recognition}, pages 1010--1019, 2016.

\bibitem[Thilak et~al.(2023)Thilak, Huang, Saremi, Dinh, Goh, Nakkiran, Susskind, and Littwin]{thilak2023lidar}
Vimal Thilak, Chen Huang, Omid Saremi, Laurent Dinh, Hanlin Goh, Preetum Nakkiran, Joshua~M Susskind, and Etai Littwin.
\newblock Lidar: Sensing linear probing performance in joint embedding ssl architectures.
\newblock \emph{arXiv preprint arXiv:2312.04000}, 2023.

\bibitem[Valada et~al.(2020)Valada, Mohan, and Burgard]{valada2020self}
Abhinav Valada, Rohit Mohan, and Wolfram Burgard.
\newblock Self-supervised model adaptation for multimodal semantic segmentation.
\newblock \emph{International Journal of Computer Vision}, 128\penalty0 (5):\penalty0 1239--1285, 2020.

\bibitem[Von~Neumann(2018)]{von2018mathematical}
John Von~Neumann.
\newblock \emph{Mathematical foundations of quantum mechanics: New edition}.
\newblock Princeton university press, 2018.

\bibitem[Wang et~al.(2016)Wang, Wang, Tao, See, and Wang]{wang2016learning}
Jinghua Wang, Zhenhua Wang, Dacheng Tao, Simon See, and Gang Wang.
\newblock Learning common and specific features for rgb-d semantic segmentation with deconvolutional networks.
\newblock In \emph{Computer Vision--ECCV 2016: 14th European Conference, Amsterdam, The Netherlands, October 11-14, 2016, Proceedings, Part V 14}, pages 664--679. Springer, 2016.

\bibitem[Wang et~al.(2020)Wang, Gao, Wang, Li, and Wu]{wang2020boundary}
Zhenzhi Wang, Ziteng Gao, Limin Wang, Zhifeng Li, and Gangshan Wu.
\newblock Boundary-aware cascade networks for temporal action segmentation.
\newblock In \emph{European Conference on Computer Vision}, pages 34--51. Springer, 2020.

\bibitem[Xia et~al.(2012)Xia, Chen, and Aggarwal]{xia2012view}
L. Xia, C.C. Chen, and JK Aggarwal.
\newblock View invariant human action recognition using histograms of 3d joints.
\newblock In \emph{Computer Vision and Pattern Recognition Workshops (CVPRW), 2012 IEEE Computer Society Conference on}, pages 20--27. IEEE, 2012.

\bibitem[Xu et~al.(2022)Xu, Rawat, Wong, Kankanhalli, and Shah]{xu2022don}
Ziwei Xu, Yogesh Rawat, Yongkang Wong, Mohan~S Kankanhalli, and Mubarak Shah.
\newblock Don't pour cereal into coffee: Differentiable temporal logic for temporal action segmentation.
\newblock \emph{Advances in Neural Information Processing Systems}, 35:\penalty0 14890--14903, 2022.

\bibitem[Yarici et~al.(2025)Yarici, Kokilepersaud, Prabhushankar, and AlRegib]{yarici2025subject}
Yavuz Yarici, Kiran Kokilepersaud, Mohit Prabhushankar, and Ghassan AlRegib.
\newblock Subject invariant contrastive learning for human activity recognition.
\newblock In \emph{2025 IEEE 35th International Workshop on Machine Learning for Signal Pro- cessing (MLSP)}, Istanbul, Turkey, 2025. IEEE.

\bibitem[Yi et~al.(2021)Yi, Wen, and Jiang]{yi2021asformer}
Fangqiu Yi, Hongyu Wen, and Tingting Jiang.
\newblock Asformer: Transformer for action segmentation.
\newblock \emph{arXiv preprint arXiv:2110.08568}, 2021.

\bibitem[Yu et~al.(2019)Yu, Yu, Cui, Tao, and Tian]{yu2019deep}
Zhou Yu, Jun Yu, Yuhao Cui, Dacheng Tao, and Qi Tian.
\newblock Deep modular co-attention networks for visual question answering.
\newblock In \emph{Proceedings of the IEEE/CVF conference on computer vision and pattern recognition}, pages 6281--6290, 2019.

\bibitem[Yuan et~al.(2025)Yuan, Li, and Zhao]{yuan2025survey}
Yuan Yuan, Zhaojian Li, and Bin Zhao.
\newblock A survey of multimodal learning: Methods, applications, and future.
\newblock \emph{ACM Computing Surveys}, 57\penalty0 (7):\penalty0 1--34, 2025.

\bibitem[Zatsarynna et~al.(2021)Zatsarynna, Abu~Farha, and Gall]{zatsarynna2021multi}
Olga Zatsarynna, Yazan Abu~Farha, and Juergen Gall.
\newblock Multi-modal temporal convolutional network for anticipating actions in egocentric videos.
\newblock In \emph{Proceedings of the IEEE/CVF Conference on Computer Vision and Pattern Recognition}, pages 2249--2258, 2021.

\bibitem[Zatsarynna et~al.(2024)Zatsarynna, Bahrami, Farha, Francesca, and Gall]{zatsarynna2024gated}
Olga Zatsarynna, Emad Bahrami, Yazan~Abu Farha, Gianpiero Francesca, and Juergen Gall.
\newblock Gated temporal diffusion for stochastic long-term dense anticipation.
\newblock In \emph{European Conference on Computer Vision}, pages 454--472. Springer, 2024.

\bibitem[Zeng et~al.(2019)Zeng, Tong, Huang, Yan, Sun, Chen, and Wang]{zeng2019deep}
Jin Zeng, Yanfeng Tong, Yunmu Huang, Qiong Yan, Wenxiu Sun, Jing Chen, and Yongtian Wang.
\newblock Deep surface normal estimation with hierarchical rgb-d fusion.
\newblock In \emph{Proceedings of the IEEE/CVF conference on computer vision and pattern recognition}, pages 6153--6162, 2019.

\bibitem[Zhang et~al.(2023)Zhang, Tan, Yang, Huang, and Yuan]{zhang2023matrix}
Yifan Zhang, Zhiquan Tan, Jingqin Yang, Weiran Huang, and Yang Yuan.
\newblock Matrix information theory for self-supervised learning.
\newblock \emph{arXiv preprint arXiv:2305.17326}, 2023.

\bibitem[Zhang et~al.(2024)Zhang, Li, Xie, Zhuang, Guo, and Li]{zhang2024multi}
Yuqi Zhang, Xiucheng Li, Hao Xie, Weijun Zhuang, Shihui Guo, and Zhijun Li.
\newblock Multi-label action anticipation for real-world videos with scene understanding.
\newblock \emph{IEEE Transactions on Image Processing}, 2024.

\bibitem[Zhong et~al.(2023)Zhong, Schneider, Voit, Stiefelhagen, and Beyerer]{zhong2023anticipative}
Zeyun Zhong, David Schneider, Michael Voit, Rainer Stiefelhagen, and J{\"u}rgen Beyerer.
\newblock Anticipative feature fusion transformer for multi-modal action anticipation.
\newblock In \emph{Proceedings of the IEEE/CVF Winter Conference on Applications of Computer Vision}, pages 6068--6077, 2023.

\end{thebibliography}
}
\clearpage
\appendix
\section{Proof of Theorem \ref{thm:main}}
\label{lab:supp}

\begin{lemma}[Stability of Dominant Subspace; see Theorem 1 in~\citep{o2023matrices}]
\label{lem:stability}
Let $X, X' \in \mathbb{R}^{T \times D}$ with $\Delta := X' - X$, and $\delta_k := \sigma_k(X) - \sigma_{k+1}(X) > 0$. If $\|\Delta\|_2 < \delta_k$, then:
\begin{equation}
\|\sin \Theta(\mathcal{U}_k, \mathcal{U}_k')\| \leq \frac{\|\Delta\|_2}{\delta_k - \|\Delta\|_2}
\end{equation}
where $\mathcal{U}_k, \mathcal{U}_k'$ are top-$k$ left singular subspaces of $X$ and $X'$.
\end{lemma}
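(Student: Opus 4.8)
The plan is to read the stated inequality as the Wedin (equivalently, Davis--Kahan) $\sin\Theta$ theorem for the top-$k$ \emph{left} singular subspace, specialized so that the denominator is expressed through the \emph{unperturbed} gap $\delta_k$. Concretely, I would first establish a core $\sin\Theta$ estimate of the form $\|\sin\Theta(\mathcal{U}_k,\mathcal{U}_k')\|_2 \le \|\Delta\|_2 / (\sigma_k(X) - \sigma_{k+1}(X'))$, and then convert its denominator into $\delta_k - \|\Delta\|_2$ by a single application of Weyl's inequality. The hypothesis $\|\Delta\|_2 < \delta_k$ enters only at the end, to guarantee that this denominator is strictly positive so the bound is meaningful.

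For the core estimate I would fix SVDs $X = [U_1\,U_2]\,\mathrm{diag}(\Sigma_1,\Sigma_2)\,[V_1\,V_2]^\top$ and $X' = [U_1'\,U_2']\,\mathrm{diag}(\Sigma_1',\Sigma_2')\,[V_1'\,V_2']^\top$, with $\Sigma_1 = \mathrm{diag}(\sigma_1,\dots,\sigma_k)$, $\mathcal{U}_k = \mathrm{span}(U_1)$, and $(\mathcal{U}_k')^\perp = \mathrm{span}(U_2')$. Since the sines of the principal angles are the singular values of the cross-block, $\|\sin\Theta(\mathcal{U}_k,\mathcal{U}_k')\|_2 = \|(U_2')^\top U_1\|_2$, and the task reduces to bounding that block. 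Left-multiplying $X'V_1 = U_1\Sigma_1 + \Delta V_1$ by $(U_2')^\top$ and using $(U_2')^\top X' = \Sigma_2'(V_2')^\top$ yields $\Sigma_2'\,(V_2')^\top V_1 - (U_2')^\top U_1\,\Sigma_1 = (U_2')^\top \Delta V_1$, while the transpose-side manipulation on $(X')^\top U_1 = V_1\Sigma_1 + \Delta^\top U_1$ gives the partner relation $\Sigma_2'\,(U_2')^\top U_1 - (V_2')^\top V_1\,\Sigma_1 = (V_2')^\top \Delta^\top U_1$. Writing $A = (U_2')^\top U_1$ and $B = (V_2')^\top V_1$, these are two coupled Sylvester equations whose right-hand sides have norm at most $\|\Delta\|_2$; solving them jointly and using that the separation between the leading block $\Sigma_1$ (entries $\ge \sigma_k$) and the trailing block $\Sigma_2'$ (entries $\le \sigma_{k+1}(X')$) is at least $\sigma_k(X)-\sigma_{k+1}(X')$ bounds $\|A\|_2$ by $\|\Delta\|_2/(\sigma_k(X)-\sigma_{k+1}(X'))$. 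Equivalently, I could obtain the same estimate by applying the symmetric Davis--Kahan theorem to the Hermitian dilation $\tilde X \in \mathbb{R}^{(T+D)\times(T+D)}$ that carries $X$ and $X^\top$ in its off-diagonal blocks, whose spectrum is $\{\pm\sigma_i\}$ (together with zeros) with relevant eigengap $\sigma_k-\sigma_{k+1}$ and whose perturbation has the same operator norm $\|\Delta\|_2$, and then reading off the $u$-block of the perturbed eigenspace.

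With the core estimate in hand, Weyl's inequality gives $\sigma_{k+1}(X') \le \sigma_{k+1}(X) + \|\Delta\|_2$, hence $\sigma_k(X) - \sigma_{k+1}(X') \ge \sigma_k(X) - \sigma_{k+1}(X) - \|\Delta\|_2 = \delta_k - \|\Delta\|_2$. Substituting into the core estimate and invoking $\|\Delta\|_2 < \delta_k$ to keep the denominator positive yields $\|\sin\Theta(\mathcal{U}_k,\mathcal{U}_k')\|_2 \le \|\Delta\|_2/(\delta_k - \|\Delta\|_2)$, which is exactly the claim.

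I expect the main obstacle to be extracting the core $\sin\Theta$ bound with the correct \emph{linear} gap and unit constant. A naive route through the Gram matrix $XX^\top$ replaces $\delta_k$ by a squared eigengap $\sigma_k^2 - \sigma_{k+1}^2$ and forces the perturbation norm to scale with $\|X\|_2$, while careless norm-splitting in the coupled Sylvester system introduces spurious factors of $2$. Obtaining the bound in precisely the form $\|\Delta\|_2/\delta'$ is what the singular-value (Wedin) formulation — or the dilation with careful bookkeeping of which block of the perturbed eigenspace is read off — is designed to deliver, so this is the step I would handle most carefully, or defer wholesale to the cited perturbation theorem of~\citep{o2023matrices}.
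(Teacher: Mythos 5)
Your proposal is correct, but note what the paper actually does with this statement: it offers no proof at all, importing the lemma verbatim as Theorem~1 of the cited reference \citep{o2023matrices}. So you are not diverging from the paper's argument --- you are supplying the standard derivation of the black-box result the paper leans on. Your two-step structure is exactly right: Wedin's $\sin\Theta$ theorem for the left singular subspace gives the core bound $\|\sin\Theta(\mathcal{U}_k,\mathcal{U}_k')\|_2 \leq \|\Delta\|_2/\bigl(\sigma_k(X)-\sigma_{k+1}(X')\bigr)$ (taking $\alpha=\sigma_{k+1}(X')$ and $\delta=\sigma_k(X)-\sigma_{k+1}(X')$ in the usual gap hypothesis, with both residuals $R=\Delta V_1$ and $S=\Delta^\top U_1$ bounded in norm by $\|\Delta\|_2$), and a single application of Weyl's inequality, $\sigma_{k+1}(X')\leq\sigma_{k+1}(X)+\|\Delta\|_2$, converts the denominator to $\delta_k-\|\Delta\|_2$, with the hypothesis $\|\Delta\|_2<\delta_k$ entering only to keep it positive --- exactly as you say. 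Your coupled Sylvester system is set up correctly ($\Sigma_2'B - A\Sigma_1$ and $\Sigma_2'A - B\Sigma_1$ with $A=(U_2')^\top U_1$, $B=(V_2')^\top V_1$), and you are right that this is where the only real danger lies: eliminating $B$ naively produces the squared-gap Sylvester equation $\Sigma_2'^2A - A\Sigma_1^2 = R'\Sigma_1 + \Sigma_2'S'$, whose crude norm bound is lossy, whereas the entrywise solve $A_{ij}=(sS'_{ij}+tR'_{ij})/(s^2-t^2)$ with $s\leq\sigma_{k+1}(X')$, $t\geq\sigma_k(X)$ recovers the linear gap with unit constant; similarly, the Hermitian-dilation route needs bookkeeping to avoid a spurious $\sqrt{2}$ when reading off the $u$-block. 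You flag both pitfalls explicitly and correctly identify the Wedin formulation (or the cited theorem itself) as the clean way to discharge them, so the proposal stands as a sound and complete plan for a proof the paper itself delegates to the literature.
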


\begin{proof}
\noindent \textbf{Step 1: Perturbation bounds.} In this step, we derive an upper bound for the spectral norm of the perturbation matrix $\Delta$. We aim to confirm that the perturbation is small enough to satisfy the conditions of the subspace stability (Lemma~\ref{lem:stability}).
Let $\Delta_c = (1-\alpha_c)(y_c - x_c)$ for $c \in \mathcal{C}_{\text{low}}$, else 0. By the triangle inequality,
\begin{equation}
\|\Delta_c\|_2 \leq |1-\alpha_c|(\|y_c\|_2 + \|x_c\|_2)
\end{equation}
From the condition $|1-\alpha_c| \leq 1$, and \textit{Assumption 1} of Theorem~\ref{thm:main} ($\|y_c\|_2 \leq \beta$):
\begin{equation}
\label{eq:7}
\|\Delta_c\|_2 \leq |1-\alpha_c|(\|y_c\|_2 + \|x_c\|_2) \leq \beta + \sqrt{I_c} \leq \beta + \sqrt{\eta}  
\end{equation}

Since the Frobenius norm of a matrix $\Delta$ is the square root of the sum of the squared L2 norms of its column vectors ($\Delta_c$), and the perturbation $\Delta$ is only non-zero for columns in the set $\mathcal{C}_{\text{low}}$:
\begin{equation}
\label{eq:9}
\|\Delta\|_F^2 = \sum_{c \in \mathcal{C}_{\text{low}}} \|\Delta_c\|_2^2
\end{equation}
By Equation~\eqref{eq:7} and Equation~\eqref{eq:9}:
\begin{equation}
\|\Delta\|_F^2 \leq \sum_{c \in \mathcal{C}_{\text{low}}} (\beta + \sqrt{\eta})^2
\end{equation}
Thus, with \textit{Assumption 3}:
\begin{equation}
\|\Delta\|_F \leq \sqrt{|\mathcal{C}_{\text{low}}|}(\beta + \sqrt{\eta}) < \delta_k/2   
\end{equation}

Since $\|\Delta\|_2 \leq \|\Delta\|_F$, we have:
\begin{equation}
\label{eq:11}
\|\Delta\|_2 < \delta_k/2   
\end{equation}

\noindent \textbf{Step 2: Subspace stability.} In this step, we use the perturbation bound from Step 1 to show that the dominant singular subspace of the data remains stable. This confirms that our fusion process does not corrupt the principal components of the original data. 
By Equation~\eqref{eq:11} and Lemma \ref{lem:stability}:
\begin{equation}
\|\sin \Theta(\mathcal{U}_k, \mathcal{U}_k')\| \leq \frac{\|\Delta\|_2}{\delta_k - \|\Delta\|_2} < \frac{\delta_k/2}{\delta_k - \delta_k/2} = 1    
\end{equation}

\noindent \textbf{Step 3: Bounding $\langle X, \Delta \rangle$.} 
In this step, we aim to show that the inner product $\langle X, \Delta \rangle$ is minimal when $\eta$ is sufficiently small and the injected signals $y_c$ are sufficiently novel to the dominant subspace of $X$. First, from the definition of $\Delta$:
\begin{align}
\langle X, \Delta \rangle = \sum_{c \in \mathcal{C}_{\text{low}}} \langle x_c, \Delta_c \rangle = \sum_{c \in \mathcal{C}_{\text{low}}} (1-\alpha_c) \langle x_c, y_c - x_c \rangle
\end{align}

The absolute value of this expression can be bounded using the triangle inequality and the condition $\alpha_c \in [0,1]$ (which implies $|1-\alpha_c| \leq 1$):
\begin{equation} \label{eq:start}
|\langle X, \Delta \rangle| \leq \sum_{c \in \mathcal{C}_{\text{low}}} |\langle x_c, y_c - x_c \rangle| \leq \sum_{c \in \mathcal{C}_{\text{low}}} \left( |\langle x_c, y_c \rangle| + \|x_c\|_2^2 \right)
\end{equation}
Here, we know that $\|x_c\|_2^2 = I_c \leq \eta$.

Now, we aim to bound $|\langle x_c, y_c \rangle|$. First, let the Singular Value Decomposition (SVD) of $X$ be $X = \sum_{i=1}^r \sigma_i u_i v_i^\top$. Its $c$-th column vector $x_c$ can be expressed as:
\begin{equation}
\label{eq:decomposition}
x_c = \sum_{i=1}^r \sigma_i v_{i,c} u_i
\end{equation}
Here, $v_{i,c}$ is the $c$-th component of the $i$-th right singular vector $v_i$. Then, we can substitute the decomposition~\eqref{eq:decomposition} into the inner product:
\begin{equation} 
\label{eq:14}
\langle x_c, y_c \rangle = \left\langle \sum_{i=1}^r \sigma_i v_{i,c} u_i, y_c \right\rangle = \sum_{i=1}^r \sigma_i v_{i,c} \langle u_i, y_c \rangle
\end{equation}

We now decompose $y_c$ into its projections onto the dominant and residual subspaces:
\begin{equation}
\label{eq:decompose}
   y_c = \underbrace{\sum_{i=1}^k \langle y_c, u_i \rangle u_i}_{y_c^{\text{dominant}}} + \underbrace{\sum_{i=k+1}^r \langle y_c, u_i \rangle u_i}_{y_c^{\text{residual}}} 
\end{equation}

Substituting into Equation~\eqref{eq:14}:
\begin{align}
\langle x_c, y_c \rangle &= \left\langle \sum_{i=1}^r \sigma_i v_{i,c} u_i, y_c^{\text{dominant}} + y_c^{\text{residual}} \right\rangle \\
&= \sum_{i=1}^r \sigma_i v_{i,c} \langle u_i, y_c^{\text{dominant}} \rangle + \sum_{i=1}^r \sigma_i v_{i,c} \langle u_i, y_c^{\text{residual}} \rangle
\end{align}

Taking absolute values:
\begin{equation}
|\langle x_c, y_c \rangle| = \left|\sum_{i=1}^r \sigma_i v_{i,c} \langle u_i, y_c^{\text{dominant}} \rangle + \sum_{i=1}^r \sigma_i v_{i,c} \langle u_i, y_c^{\text{residual}} \rangle\right|
\end{equation}

Applying the triangle inequality:
\begin{equation}
\label{eq:triangle}
|\langle x_c, y_c \rangle| \leq \underbrace{\left|\sum_{i=1}^r \sigma_i v_{i,c} \langle u_i, y_c^{\text{dominant}} \rangle\right|}_{T_1} + \underbrace{\left|\sum_{i=1}^r \sigma_i v_{i,c} \langle u_i, y_c^{\text{residual}} \rangle\right|}_{T_2}
\end{equation}

Since we define $y_c^{\text{dominant}}$ as  $y_c^{\text{dominant}} = \sum_{j=1}^k \langle y_c, u_j \rangle u_j$ in Equation~\eqref{eq:decompose}: 
\begin{align}
T_1 &= \left|\sum_{i=1}^r \sigma_i v_{i,c} \left\langle u_i, \sum_{j=1}^k \langle y_c, u_j \rangle u_j \right\rangle\right| \\
&= \left|\sum_{j=1}^k \langle y_c, u_j \rangle \sum_{i=1}^r \sigma_i v_{i,c} \langle u_i, u_j \rangle\right| \\
&= \left|\sum_{j=1}^k \langle y_c, u_j \rangle (\sigma_j v_{j,c})\right| \quad \text{(by orthonormality of $u_i$)}
\end{align}

Now we can bound $T_1$ by applying the Cauchy-Schwarz inequality:
\begin{align}
T_1 &\leq \left(\sum_{j=1}^k |\langle y_c, u_j \rangle|^2\right)^{1/2} \left(\sum_{j=1}^k |\sigma_j v_{j,c}|^2\right)^{1/2} \\
\label{eq:t1-bound}
&\leq \left(\sum_{j=1}^k |\langle y_c, u_j \rangle|^2\right)^{1/2} \sqrt{I_c} \quad \text{(since $I_c = \sum_{j=1}^r \sigma_j^2 v_{j,c}^2$)}
\end{align}

Now let us bound $T_2$. Since we define $y_c^{\text{residual}}$ in Equation~\eqref{eq:decompose} as $\langle u_i, y_c^{\text{residual}} \rangle = 0$ for $i \leq k$:
\begin{align}
T_2 &= \left|\sum_{i=1}^r \sigma_i v_{i,c} \langle u_i, y_c^{\text{residual}} \rangle\right| \\
&= \left|\sum_{i=k+1}^r \sigma_i v_{i,c} \langle u_i, y_c^{\text{residual}} \rangle\right|
\end{align}

Applying the Cauchy-Schwarz inequality two times:
\begin{align}
T_2 &\leq \left(\sum_{i=k+1}^r |\sigma_i v_{i,c}|^2\right)^{1/2} \left(\sum_{i=k+1}^r |\langle u_i, y_c^{\text{residual}} \rangle|^2\right)^{1/2} \\
\label{eq:t2-bound}
&\leq \sqrt{I_c} \cdot \left(\sum_{i=k+1}^r |\langle u_i, y_c^{\text{residual}} \rangle|^2\right)^{1/2}
\end{align}

Since $y_c^{\text{residual}} = \sum_{i=k+1}^r \langle y_c, u_i \rangle u_i$:
\begin{equation}
\label{eq:t2-bound-final}
~\eqref{eq:t2-bound} = \sqrt{I_c} \cdot \|y_c^{\text{residual}}\|_2
\end{equation}

Now we combine both bounds from Equation~\eqref{eq:t1-bound} and Equation~\eqref{eq:t2-bound-final}. Recalling from Equation~\eqref{eq:triangle}:
\begin{align}
\label{eq:recalling-triangle}
|\langle x_c, y_c \rangle| &\leq T_1 + T_2
\end{align}

By applying \textit{Assumption 4}, where $\sum_{i=1}^k |\langle y_c, u_i \rangle|^2 \leq \gamma \sum_{i=k+1}^r |\langle y_c, u_i \rangle|^2 = \gamma \|y_c^{\text{residual}}\|_2^2$ to both Equation~\eqref{eq:t1-bound} and Equation~\eqref{eq:t2-bound-final}:
\begin{align}
 T_1 + T_2 &\leq \sqrt{\gamma} \|y_c^{\text{residual}}\|_2  \sqrt{I_c} + \sqrt{I_c} \|y_c^{\text{residual}}\|_2 \\
\label{eq:final-bound}
&= \sqrt{I_c} \|y_c^{\text{residual}}\|_2 (\sqrt{\gamma} + 1) 
\end{align}

Since $\|y_c^{\text{residual}}\|_2 \leq \|y_c\|_2 \leq \beta$ (by \textit{Assumption 1}) and $I_c \leq \eta$ for $c \in \mathcal{C}_{\text{low}}$, by combining Equation~\eqref{eq:recalling-triangle} and Equation~\eqref{eq:final-bound}, we finally get:
\begin{equation}
|\langle x_c, y_c \rangle| \leq \beta \sqrt{\eta} (\sqrt{\gamma} + 1)
\end{equation}

Now, we substitute this bound and $\|x_c\|_2^2 = I_c \leq \eta$ into Equation \eqref{eq:start}:
\begin{align}
|\langle X, \Delta \rangle| &\leq \sum_{c \in \mathcal{C}_{\text{low}}} \left( |\langle x_c, y_c \rangle| + \|x_c\|_2^2 \right) \\
&\leq \sum_{c \in \mathcal{C}_{\text{low}}} \left( \beta \sqrt{\eta} (\sqrt{\gamma} + 1) + \eta \right) \\
\label{eq:step3-result}
&= |\mathcal{C}_{\text{low}}| \left( \beta (\sqrt{\gamma} + 1) \sqrt{\eta} + \eta \right)
\end{align}

This final bound shows that the inner product between $X$ and $\Delta$ becomes very small when $\eta$ is sufficiently small and $\gamma$ is bounded. The bounded alignment assumption (small $\gamma$, \textit{Assumption 4}) ensures that the injected signals $y_c$ contribute primarily to the residual subspace rather than interfering with the dominant patterns of $X$.

\noindent \textbf{Step 4: Spectral flattening.} In this step, we combine the previous results to analyze the change in the representation's overall magnitude, measured by the squared Frobenius norm $\|X'\|_F^2$. 
The squared Frobenius norm is defined as the inner product of a matrix with itself as below.
\begin{equation}
\|X'\|_F^2 = \langle X', X' \rangle   
\end{equation}

Since $X' = X + \Delta$:
\begin{equation}
    \|X'\|_F^2 = \langle X + \Delta, X + \Delta \rangle
\end{equation}

Using the distributive property (bilinearity) of the inner product:
\begin{equation}
    \langle X + \Delta, X + \Delta \rangle = \langle X, X \rangle + \langle X, \Delta \rangle + \langle \Delta, X \rangle + \langle \Delta, \Delta \rangle
\end{equation}

Since $\langle X, \Delta \rangle = \langle \Delta, X \rangle$ for the Frobenius inner product, the expression simplifies to:
\begin{equation}
    \|X'\|_F^2 = \|X\|_F^2 + 2\langle X, \Delta \rangle + \|\Delta\|_F^2
\end{equation}

Rearranging the equation,
\begin{equation}
    \|X'\|_F^2 - \|X\|_F^2 = 2\langle X, \Delta \rangle + \|\Delta\|_F^2
\end{equation}

Combining the result of \textbf{Step 3} (Equation~\eqref{eq:step3-result}) and \textit{Assumption 2} ($\|\Delta\|_F^2 \geq \epsilon$):
\begin{align}
\|X'\|_F^2 - \|X\|_F^2 &= 2\langle X, \Delta \rangle + \|\Delta\|_F^2 \\
&\geq -2|\mathcal{C}_{\text{low}}| \left( \beta (\sqrt{\gamma} + 1) \sqrt{\eta} + \eta \right) + \epsilon \\
&> \epsilon/2 > 0
\end{align}
for sufficiently small $\eta$ and $\gamma$. This positive change is injected primarily into the tail subspace, as ensured by the bounded alignment assumption, leading to a flatter singular value spectrum.

\noindent \textbf{Step 5: Effective rank increase.}
From \textbf{Step 2}, the dominant subspace remains stable. Also, from \textbf{Step 4}, the squared Frobenius norm of the representation increases, i.e., $\|X'\|_F^2 > \|X\|_F^2 + \epsilon/2$. The bounded alignment condition (\textit{Assumption 4}) ensures that the modification $\Delta$ is poorly aligned with the dominant subspace of $X$. This, combined with the fact that the update is restricted to low-informativeness channels (which are inherently less aligned with the dominant subspace), suggests that the increase in squared Frobenius norm is primarily concentrated in the residual singular values. 

Therefore, we have:
\begin{align}
\sum_{j\leq k} \sigma_j'^2 &\approx \sum_{j\leq k} \sigma_j^2 \quad \text{(dominant spectrum preserved)} \\
\sum_{j>k} \sigma_j'^2 &> \sum_{j>k} \sigma_j^2 + \epsilon/2 \quad \text{(residual spectrum amplified)}
\end{align}

Let $p$ and $p'$ be the normalized singular value spectra of $X$ and $X'$. This redistribution of squared singular values, where the total sum increases while the dominant part remains roughly constant, leads to a \textit{flattening} of the normalized singular value distribution $p'$ compared to $p$. Specifically, the increase in total mass ($\|X'\|_F^2 > \|X\|_F^2$) is concentrated in the tail, which reduces the probability mass of the dominant components. Since the Shannon entropy $H(p)$ is Schur-concave, it increases when the probability mass is transferred from larger components to smaller ones~\citep{marshall1979inequalities}. Therefore:
\begin{equation}
    H(p') > H(p) \implies \mathrm{ERank}(X') > \mathrm{ERank}(X)
\end{equation}

\end{proof}

% \begin{figure}[t!]
% \begin{center}
% \includegraphics[width=0.8\linewidth]{wacv-2026-author-kit-template/figure/mutual-info.pdf}
% \end{center}
%    \caption{Mutual information between}
% \label{fig:mutualinfo-analysis}
% \end{figure}

\section{Analysis}
\subsection{Additional Analysis on Spectrum Flattening}
\label{sup:fig2-explained}
% \begin{table}[t]
% \centering
% \scalebox{0.8}{
% \begin{tabular}{lccc}
% \toprule
% Fusion Direction & Avg Cosine & Max Cosine & Min Cosine \\
% \midrule
% RGB $\rightarrow$ Depth & 0.965 & 0.999 & 0.753 \\
% Depth $\rightarrow$ RGB & 0.943 & 0.998 & 0.684 \\
% \bottomrule
% \end{tabular}}
% \caption{Cosine similarity between injected features and the dominant subspace of the original representation. Lower values indicate better orthogonality, which contributes more effectively to rank increase.}
% \label{tab:orthogonality}
% \end{table}

\begin{table}[ht]
\centering
\scalebox{0.7}{
\begin{tabular}{c c c}
\toprule
\textbf{Dataset} & \(\mathbf{X \leftarrow Y}\) (Fusion Direction) & \textbf{Angle between subspaces (°)} \\
\midrule
\multirow{2}{*}{\shortstack{DARai\\(Coarse)}} & RGB   \(\leftarrow\) Depth  & 48.74\\
                                              & Depth \(\leftarrow\) RGB    & 47.73\\
\midrule
\multirow{2}{*}{UTKinect} & RGB   \(\leftarrow\) Depth  & 63.80\\
                                              & Depth \(\leftarrow\) RGB    & 65.57\\
\midrule
\multirow{2}{*}{\shortstack{DARai\\(Fine-grained)}} & RGB   \(\leftarrow\) Depth  & 48.68\\
                                                   & Depth \(\leftarrow\) RGB    & 47.20\\
\midrule
\multirow{2}{*}{NTURGBD}                      & RGB   \(\leftarrow\) Depth  & 34.25\\
                                              & Depth \(\leftarrow\) RGB    & 31.95\\
\bottomrule
\end{tabular}}
\caption{This shows an complementarity (in degrees) between the principal subspaces \(\mathbf{X}\) and the injected channels \(\mathbf{Y_{C_{low}}}\). Higher values (closer to 90°) indicate stronger complementarity, which contributes more effectively to rank increase.}
\label{tab:orthogonality}
\vspace{-0.3cm}
\end{table}
In Figure~\ref{fig:eigen-spectra}, we show that the extent of improvement varies across datasets. For instance, in UTKinect (Figure~\ref{fig:eigen-spectra} (b)), Depth shows more pronounced spectral expansion than RGB, whereas in DARai (Figure~\ref{fig:eigen-spectra}  (a),(c)) and NTURGBD (Figure~\ref{fig:eigen-spectra} (d)), RGB shows more expansion than Depth. To explain this, Table~\ref{tab:orthogonality} quantifies the complementarity between modalities via principal angles (Assumption 4 in Theorem~\ref{thm:main}). In UTKinect, the angle from RGB to Depth (65.57°) is larger than the reverse (63.80°), suggesting that RGB contributes more novel information to Depth, consistent with the stronger spectral expansion observed in Depth shown in Figure~\ref{fig:eigen-spectra} (b). In contrast, the smaller and more symmetric angles in DARai and NTURGBD correspond to more prominent gains in RGB. These findings validate our theoretical insight that the degree of complementarity between modalities affects how much each can benefit from fusion.

\section{Architecture}
\subsection{Temporal Fuser Architecture}
\label{lab:temporal-fuser}
\vspace{0.2cm}
\noindent \textbf{MHSA.} To understand how different frames relate to each other and provide contextual meaning to each scene, we employ a MHSA within each temporal fusion layer. Unlike traditional sequential models (e.g., RNNs), self-attention enables the model to dynamically attend to the most relevant frames, ensuring that each time step is interpreted in the correct context. Given an input of the stacked multi-modal feature sequence $\mathbf{F}_{\text{stacked}} = \begin{bmatrix} \mathbf{F}^{\text{RGB}} \\ \mathbf{F}^{\text{Depth}} \end{bmatrix} \in \mathbb{R}^{B \times T \times 2C}$, where \( B \) is the batch size, \( T \) is the video sequence length, and \( 2C \) represents the concatenated feature dimensions from both modalities, we define the MHSA operation as:
\begin{equation}
    \text{MHSA}(\mathbf{F}^{(l)}) = \text{Concat}(\mathbf{A}_1, \mathbf{A}_2, \dots, \mathbf{A}_h) \mathbf{W}_o,
\end{equation}
where $\mathbf{F}^{(l)}$ is the input sequence representation in $l^{\text{th}}$ layer, \( \text{Concat}(\cdot) \) denotes a concatenation function, \( h \) is the number of attention heads and \( \mathbf{W}_o \in \mathbb{R}^{(h \cdot d_h) \times d} \) is the output projection matrix.

Each attention head computes:
\begin{equation}
    \mathbf{A}_i = \text{Softmax}\left(\frac{\mathbf{Q}_i \mathbf{K}_i^\top}{\sqrt{d_h}}\right) \mathbf{V}_i,
\end{equation}
where \( \mathbf{Q}_i = \mathbf{F}^{(l)} \mathbf{W}_i^Q \), \( \mathbf{K}_i = \mathbf{F}^{(l)} \mathbf{W}_i^K \), \( \mathbf{V}_i = \mathbf{F}^{(l)} \mathbf{W}_i^V \), are the query, key, and value projection matrices, respectively, \( \mathbf{W}_i^Q, \mathbf{W}_i^K, \mathbf{W}_i^V \in \mathbb{R}^{d \times d_h} \), and \( d_h = \frac{d}{h} \) is the feature dimension per head.

To ensure stable training, we apply Layer Normalization \( \text{LN}(\cdot) \) and residual connections:
\begin{equation}
    Z_t = \text{LN}(F_{\text{stacked}}) + \text{MHSA}(\text{LN}(F_{\text{stacked}})).
\end{equation}
\vspace{0.2cm}
\noindent \textbf{MLP.} While MHSA models dependencies across time, it does not directly enhance the expressiveness of individual frame features. To refine the representation of each frame, we apply an \( \text{MLP}(\cdot) \) to introduce non-linearity and feature transformation. The MLP module operates as:
\begin{align}
    \mathbf{F}^{(l+1)} &= \text{LN}(\mathbf{Z}^{(l)} + \text{MLP}(\mathbf{Z}^{(l)})),
\end{align}
where \( \text{LN}(\cdot) \) and \( \text{MLP}(\cdot) \) are applied independently at each time step.

After \( L \) layers of temporal fusion, we obtain the final output representation:

\begin{equation}
    \mathbf{F}_{\text{final}} = \text{LN}(\mathbf{F}^{(L)}) \in \mathbb{R}^{B \times T \times d}.
\end{equation}

\vspace{0.2cm}
\noindent \textbf{Segmentation head.} The ultimate goal of the Temporal Fuser is to determine what each frame represents by assigning an action label to every time step. To accomplish this, we employ a segmentation head, which consists of a fully connected layer $\text{FC}(\cdot)$: \begin{equation} \mathbf{\hat{\mathbf{Y}}} = \text{FC}(\mathbf{F}_{\text{final}}),\end{equation}
where $\hat{y_{t}}$ the predicted action class at time step $t$.

\subsection{Action anticipation Module Architecture}
\label{lab:action-anticipation-module-arc}
\vspace{0.2cm}
\noindent \textbf{Future queries as learnable representations.} Instead of relying on a single deterministic output for action anticipation, we introduce learnable queries—denoted as future queries ($\mathbf{Q}_{\text{future}}$) to predict multiple action possibilities \cite{gong2022future}. These queries are randomly initialized and learned during training. Given $\mathbf{N_q}$ future queries, we define: $\mathbf{Q}_{\text{future}} \in \mathbb{R}^{N_q \times C}$ where $C$ is the hidden dimension. These queries are independent of any specific frame and instead act as a learnable representation that extracts relevant information from the past observations.

\vspace{0.2cm}
\noindent \textbf{Multi-head cross-attention (MHCA).} To effectively anticipate future actions, the model must attend to relevant moments in the past. To achieve this, we employ MHCA, where the future queries ($\mathbf{Q}_{\text{future}}$) interact with the Temporal Fuser's output ($\mathbf{F}_{\text{temporal}}$).
\begin{equation}
\mathbf{H}_{\text{attn}}^{(h)} = \text{Attention}(\mathbf{Q}_{\text{future}} \mathbf{W}_Q^{(h)}, \mathbf{F}_{\text{temporal}} \mathbf{W}_K^{(h)}, \mathbf{F}_{\text{temporal}} \mathbf{W}_V^{(h)}),
\end{equation} where \(h\) denotes the head index, and \(\mathbf{W}_Q^{(h)}, \mathbf{W}_K^{(h)}, \mathbf{W}_V^{(h)}\) are learnable projection matrices for queries, keys, and values, respectively.

The outputs from all attention heads are concatenated and projected through a learnable weight matrix \(\mathbf{W}_O\):
\begin{equation}
    \mathbf{H}_{\text{MHCA}} = \text{Concat}(\mathbf{H}_{\text{attn}}^{(1)}, \mathbf{H}_{\text{attn}}^{(2)}, \dots, \mathbf{H}_{\text{attn}}^{(H)}) \mathbf{W}_O,
\end{equation}
where $H$ denotes the number of attention head and  \(\mathbf{W}_O\) ensures that the output dimensionality remains consistent.

This formulation allows the model to refine future action representations by selectively attending to relevant past temporal features.

\vspace{0.2cm}
\noindent \textbf{Feed-forward network (FFN).} The output of $\text{MHCA}(\cdot)$ is further processed through a $\text{FFN}(\cdot)$ to capture non-linear dependencies in the learned future queries:
$\text{FFN}(\mathbf{X}) = \sigma(\mathbf{X} \mathbf{W}_1 + \mathbf{b}_1) \mathbf{W}_2 + \mathbf{b}_2,$
where $\mathbf{W}_1 \in \mathbb{R}^{d \times d_f}$, $\mathbf{W}_2 \in \mathbb{R}^{d_f \times d}$, and $\sigma(\cdot)$ is a non-linear activation.

\vspace{0.2cm}
\noindent \textbf{Action anticipation head.} The refined representation from the $\text{FFN}(\cdot)$ is used to anticipate the future action label using a $\text{FC}(\cdot)$ layer: \begin{equation} \hat{y}_{\text{future}} = \text{FC}(\text{FFN}(\mathbf{X})),
\end{equation}
where $\hat{y}_{\text{future}}$ denotes the predicted future action class.

\section{Experimental Details}
\subsection{Datasets}
\label{lab:datasets}
We aim to evaluate \texttt{R3D}'s ability to anticipate human actions in diverse and realistic scenarios, particularly in settings where spatial reasoning and multi-modal fusion play a critical role. To this end, we utilize three action anticipation datasets: NTURGBD \cite{shahroudy2016ntu}, UTKinect-Action3D \cite{xia2012view}, and DARai \cite{kaviani2025hierarchical}.
The UTKinect-Action3D dataset was collected by the University of Texas at Austin and features 10 types of human actions, each performed twice by 10 subjects from multiple viewpoints. The dataset poses additional challenges due to actor-dependent variations, occlusions caused by human-object interactions, and body parts moving out of the field of view.
The NTURGBD dataset is one of the largest and most diverse action recognition datasets, designed for 3D human action analysis. It contains 56,880 video samples covering 60 action classes performed by 40 subjects from 80 viewpoints.
The DARai dataset offers highly realistic scenarios that closely resemble real-world human behavior. Unlike UTKinect and NTURGBD, the videos in DARai are untrimmed, showcasing raw, continuous human activity in real-world contexts without artificial segmentation. It comprises 150 action classes, 50 participants, two distinct exocentric views, and three levels of hierarchical labels.

\subsection{Experimental setups}
\label{lab:setup}
We use pretrained ResNet features as input visual features for the NTURGBD, UTKinects, and DARai. To align with the temporal resolution of each dataset, the sampling rates are set to 15 for DARai, and 1 for UTKinects and NTURGBD. We use the number of Future Queries $\mathbf{N_{\text{q}}}$ fixed at 8. Based on the density of each dataset, the hidden dimension size $D$ is set to 128 for both NTURGBD, UTKinects, and DARai. During training, the observation rate $\alpha$ is set to $\alpha \in \{0.2, 0.3, 0.5\}$, while the prediction rate $\beta$ is fixed at 0.5. Yet when training UTKinects, to further augment the training dataset, we use the observation rate $\alpha \in \{0.2, 0.25, 0.3, 0.35, 0.4, 0.45, 0.5, 0.55\}$.  The model is trained for 60 epochs using the AdamW optimizer \cite{loshchilov2017decoupled} with a learning rate of $1e-3$ and a batch size of 8. A cosine annealing warm-up scheduler \cite{loshchilov2016sgdr} is applied, with warm-up stages spanning 10 epochs. For evaluation setup, we follow the long-term action anticipation framework protocol~\citep{abu2018will,abu2021long,sener2020temporal,ke2019time}. We set the observation rate $\alpha \in \{0.2, 0.3\}$ and prediction rate $\beta \in \{0.1, 0.2, 0.3, 0.5\}$, and measure mean over classes (MoC) accuracy for evaluation metrics. To ensure consistency, we report average performance across 3 number of iteration, each with fixed seeds $1, 10, 13452$.

\subsection{Computational cost and Scalabilty}
\label{lab:comp-cost-scalability}
\begin{table}[t]
\centering
\scalebox{0.70}{
\begin{tabular}{lccc}
\toprule
Method & Time / frame (ms) & FLOPs (per frame) \\
\midrule
GTAN~\citep{zatsarynna2024gated} & 5.920 & 49 GFLOPs \\
\midrule
\textbf{Ours} & 0.119 & 0.58 GFLOPs \\
\quad without \texttt{RTF}    & 0.020 & 0.12GFLOPs    \\
\quad 128 $\rightarrow$ 64 channels & 0.090 & 0.24 GFLOPs   \\
\bottomrule
\end{tabular}}
\caption{Comparison of computational cost and scalability on an NVIDIA A40 GPU.}
\label{tab:computational-cost}
\end{table}

\noindent \textbf{Computational cost.} We evaluate the computational cost and scalability of \texttt{R3D} on an NVIDIA A40 GPU. For comparison, we chose GTAN~\citep{zatsarynna2024gated}, the current state-of-the-art model reported in Table~\ref{tab:main-experiment}. As shown in Table~\ref{tab:computational-cost}, \texttt{R3D} processes a single RGB-Depth frame in 0.119 ms, requiring 61.77MB of memory and 0.58 GFLOPs. In contrast, GTAN requires 5.92ms per RGB frame and approximately 49 GFLOPs. This gap arises because GTAN is a diffusion-based model whose computational cost scales linearly with the number of sampling steps, while \texttt{R3D} avoids such overhead.

\noindent \textbf{Scalability.} As shown in Table~\ref{tab:computational-cost} without \texttt{RTF}, inference on a single RGB-Depth frame costs 0.02ms per frame. This is because \texttt{RTF} involves SVD operations, which is the main contributor to runtime. To mitigate this overhead, we experiment with lowering the channel dimension from 128 to 64 before the \texttt{RTF} stage on the UTKinects dataset. This adjustment reduced the per-frame cost from 0.119ms to 0.09ms and from 0.58 GFLOPs to 0.24 GFLOPs, while still achieving state-of-the-art performance. In fact, the results reported in Table~\ref{tab:main-experiment} for UTKinects are based on this scaled configuration, showing that hyperparameter tuning enables us to maintain accuracy while significantly improving computational efficiency.

\section{More Ablation Studies}
\begin{table}[t!]
\centering
\renewcommand{\arraystretch}{1.2}
\scalebox{0.5}{
\begin{tabular}{@{}llcccccccccccc@{}}
\toprule
\multirow{3}{*}{Dataset} & \multirow{3}{*}{Methods} & \multicolumn{4}{c}{$\beta (\alpha=0.2)$} 
& \multicolumn{4}{c}{$\beta (\alpha=0.3)$} \\  
\cmidrule(lr){3-6} \cmidrule(lr){7-10}
 &  & 0.1 & 0.2 & 0.3 & 0.5 & 0.1 & 0.2 & 0.3 & 0.5 \\  
\midrule
\multirow{2}{*}{UTKinects}  
 & W/O & 31.31 & 23.02 & 22.11 & 17.18 & 13.40 & 17.31 & 15.78 & 11.34 \\  
 & W/ & \textbf{38.96} & \textbf{37.68} & \textbf{37.16} & \textbf{35.95} &
 \textbf{34.80} &
 \textbf{32.51} &
 \textbf{36.03} &
 \textbf{26.98}\\  
\midrule
\multirow{2}{*}{NTURGBD}  
 & W/O & 16.77 & 16.59 & 16.59 & 16.62 & 20.64 & 20.73 & 20.76 & 20.85  \\  
 & W/ & \textbf{21.98} & \textbf{20.18} & \textbf{19.72} & \textbf{18.90} & \textbf{23.17} & \textbf{21.27} & \textbf{20.51} & \textbf{19.89} \\ 
\midrule
\multirow{2}{*}{DARai}  
 & W/O & 20.92 & 20.43 & 19.37 & 18.87 & 37.88 & 31.04 & 30.51 & 29.20\\  
 & W/ & \textbf{33.44} & \textbf{32.14} & \textbf{30.56} & \textbf{29.59} & \textbf{46.29} & \textbf{42.05} & \textbf{43.41} & \textbf{40.25} \\  
\bottomrule
\end{tabular}}
\caption{Ablation Study across three datasets evaluating the impact of Temporal Fuser. ``W'' denotes the use of Temporal Fuser, while ``W/O'' indicates its absence.}
\label{tab:ablation-temporal}
\vspace{-0.3cm}
\end{table}
\begin{table}[t!]
\centering
\renewcommand{\arraystretch}{1.2}
\scalebox{0.5}{
\begin{tabular}{@{}llcccccccc@{}}
\toprule
\multirow{2}{*}{Dataset} & \multirow{2}{*}{Methods} & \multicolumn{4}{c}{$\beta (\alpha=0.2)$} & \multicolumn{4}{c}{$\beta (\alpha=0.3)$} \\  
\cmidrule(lr){3-6} \cmidrule(lr){7-10} 
 &  & 0.1 & 0.2 & 0.3 & 0.5 & 0.1 & 0.2 & 0.3 & 0.5 \\  
\midrule
\multirow{2}{*}{UTKinects}  
 & Shared & 34.07 & \textbf{40.43} & 35.06 & 27.60 & 27.11 & 26.82 & 34.62 & 25.30 \\  
 & Separated & \textbf{38.96} & 37.68 & \textbf{37.16} & \textbf{35.95} & \textbf{34.80} & \textbf{32.51} & \textbf{36.03} & \textbf{26.98} \\  
\midrule
\multirow{2}{*}{NTURGBD}  
 & Shared & 10.39 & 10.21 & 10.23 & 10.23 & 9.09 & 9.28 & 9.26 & 9.32 \\  
 & Separated & \textbf{21.98} & \textbf{20.18} & \textbf{19.72} & \textbf{18.90} & \textbf{23.17} & \textbf{21.27} & \textbf{20.51} & \textbf{19.89} \\ 
\midrule
\multirow{2}{*}{DARai (Coarse)}  
 & Shared & 27.83 & 28.16 & 26.25 & 24.75 & 38.90 & 31.18 & 32.16 & 30.66 \\  
 & Separated & \textbf{33.44} & \textbf{32.14} & \textbf{30.56} & \textbf{29.59} & \textbf{46.29} & \textbf{42.05} & \textbf{43.41} & \textbf{40.25} \\  
 \midrule
\multirow{2}{*}{DARai (Fine)}  
 & Shared & 29.60 & 25.44 & 23.70 & 16.13 & \textbf{27.46} & 26.12 & 22.17 & 17.65 \\  
 & Separated & \textbf{32.57} & \textbf{25.92} & \textbf{24.02} & \textbf{16.68} & 26.97 & \textbf{28.43} & \textbf{24.81} & \textbf{18.02} \\   
\bottomrule
\end{tabular}}
\caption{Ablation study on the effect of projecting each modality into a shared latent space prior to the Token Fuser. ``Shared" denotes the setting where modality-specific features are projected into a common latent space before fusion, while ``Separated" refers to the setting where modalities are fused without this projection step.}
\label{tab:alignment-ablation}
\vspace{-0.5cm}
\end{table}

\subsection{Impact of Temporal Fuser}
\label{sup:ablation-study}
We conduct an ablation study on the Temporal Fuser, which models temporal dependencies across RGB and depth features. As observed in Table \ref{tab:ablation-temporal}, removing the Temporal Fuser leads to a noticeable performance drop, highlighting its role in leveraging temporal context for action anticipation.

\subsection{Ablation on Latent Space Projection}
\label{sup:ablation-latent}
To evaluate the impact of aligning modalities in a shared feature space prior to fusion, we conduct an ablation study comparing two settings: Shared, where modality-specific features are projected into a common latent space before entering the Rank-enhancing Token Fuser, and Separated, where fusion is performed without this projection step. As shown in Table~\ref{tab:alignment-ablation}, the Separated setting consistently  outperforms the Shared one. We hypothesize that this performance gap arises because projecting both modalities into the same latent space reduces the structural differences between them. This highlights the importance of maintaining modality-specific representations before fusion in order to fully leverage richer and more complementary information from both modalities.

\begin{figure}[t!]
\begin{center}
\includegraphics[width=1\linewidth]{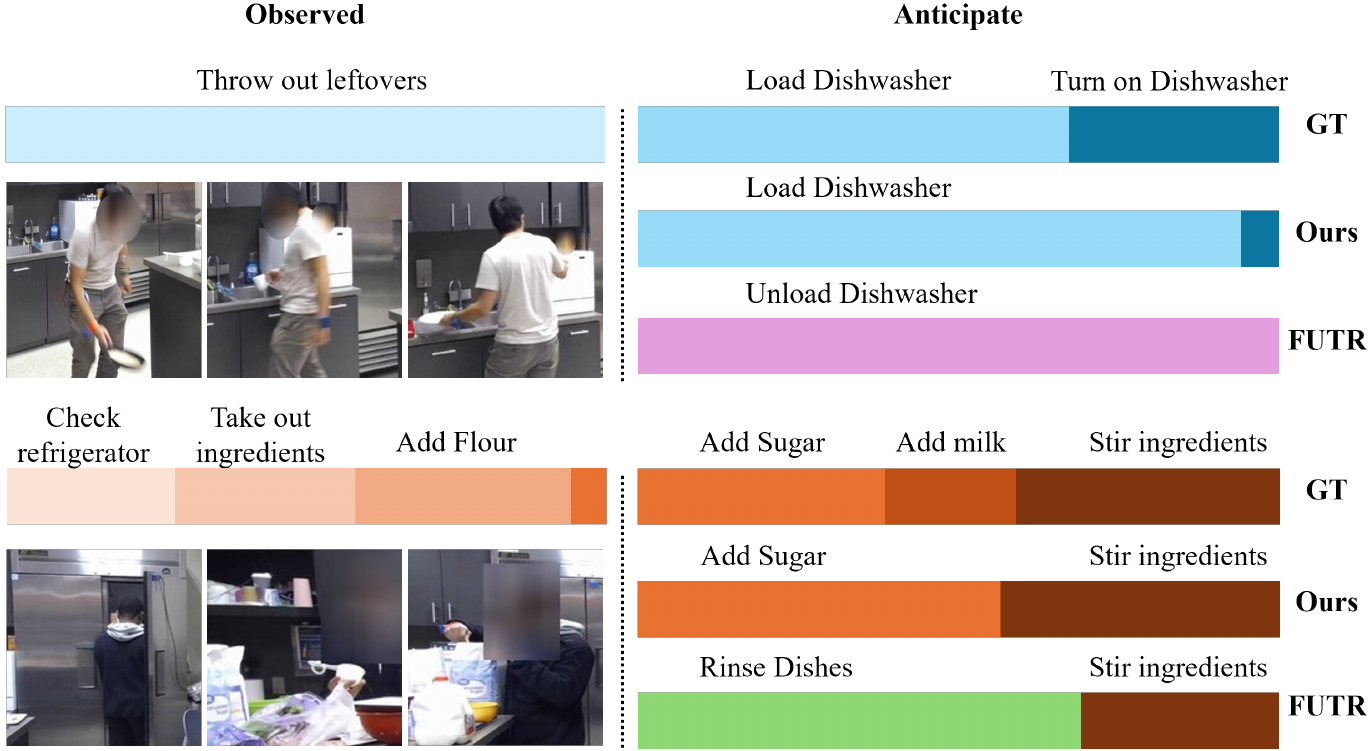}
\end{center}
\vspace{-0.5cm}
   \caption{Qualitative comparison of action anticipation results between the Ground Truth (GT), the RGB-based baseline model (GTAN), and \texttt{R3D}.}
\label{fig:qualitative}
\end{figure}

\section{Qualitative Analysis}
\label{lab:qualitative}
To assess the ability to distinguish fine-grained human actions, we compare \texttt{R3D} against the strongest baseline from Table \ref{tab:main-experiment}, using the most detailed label set—L3 (procedure) level from the DARai dataset. Figure \ref{fig:qualitative} presents a qualitative comparison between the ground truth (GT), the RGB-based GTAN model, and \texttt{R3D}. In the first example of Figure \ref{fig:qualitative}, the task requires differentiating between loading and unloading a dish into a dishwasher. The RGB-only GTAN model struggles with this distinction as it lacks explicit spatial cues to determine whether the dish is moving toward or away from the dishwasher. This results in misclassification due to the absence of motion directionality cues. In contrast, by allowing the model to infer that the dish is moving toward the dishwasher, \texttt{R3D} correctly predicts `loading' by leveraging depth-based directionality. In the second example of Figure \ref{fig:qualitative}, the person first adds flour, then quickly transitions to adding sugar. Due to the brevity of this transition, only a small portion of the add sugar action is observed before the model must anticipate the next step. \texttt{R3D} successfully distinguishes between adding sugar and adding flour, demonstrating its ability to capture subtle procedural differences. This is achieved as \texttt{R3D} effectively utilizes the strengths of RGB features for recognizing sugar and flour.

\begin{figure}[t!]
\begin{center}
\includegraphics[width=1\linewidth]{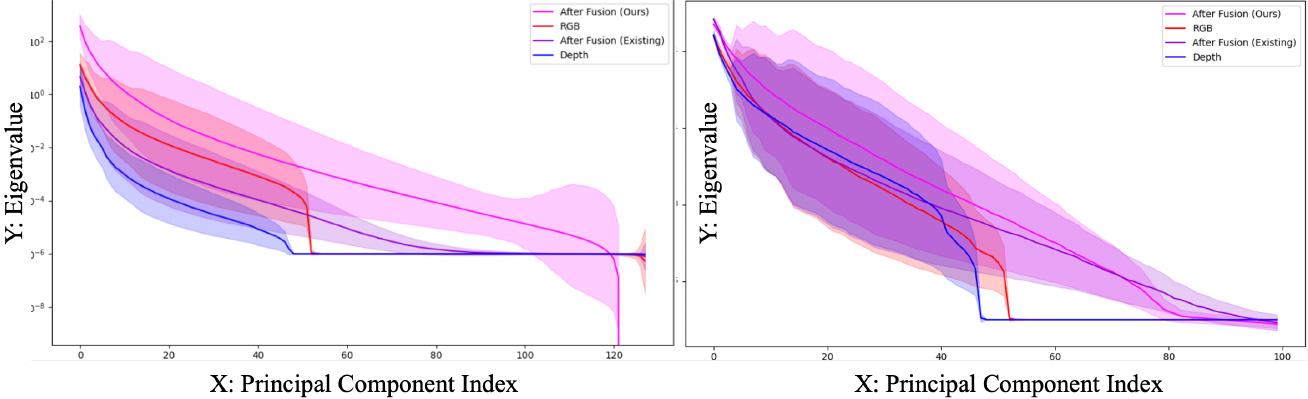}
\end{center}
\vspace{-0.5cm}
   \caption{This figure compares the eigenvalue distribution of RGB (red), Depth (blue), existing fusion method (purple), and our proposed method (pink) in DARai (Left) and NTURGBD (Right) Dataset. Unlike the existing fusion method, where the eigenvalues fall below those of the RGB modality, our method exhibits a broader and higher eigenvalue spectrum than both RGB and Depth.}
\label{fig:modality-collapse}
\vspace{-0.5cm}
\end{figure}

\begin{figure}[t!]
\begin{center}
\includegraphics[width=1\linewidth]{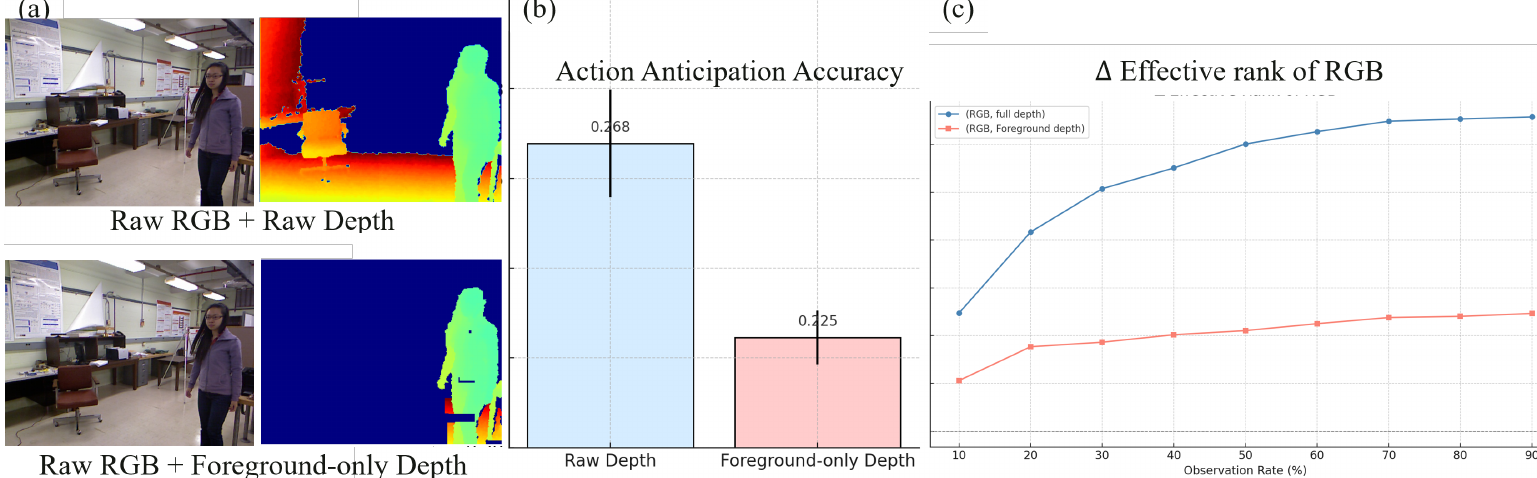}
\end{center}
\vspace{-0.5cm}
   \caption{Analysis of the impact of background depth in RGB-depth fusion for action anticipation. (a) Two types of inputs from the UTKinect dataset: RGB images paired with either raw depth (top) or foreground-only depth (bottom), where foreground regions are segmented using joint annotations. (b) Average action anticipation performance (across 3 seeds) when fusing RGB with raw depth (blue) versus foreground-only depth (red). (c) Change in the effective rank of RGB representations as a function of observation rate, highlighting that raw depth significantly enhances the expressive capacity of RGB features compared to foreground-only depth.}
\label{fig:foreground-depth}
\vspace{-0.2cm}
\end{figure}

\begin{figure}[t!]
\begin{center}
\includegraphics[width=1\linewidth]{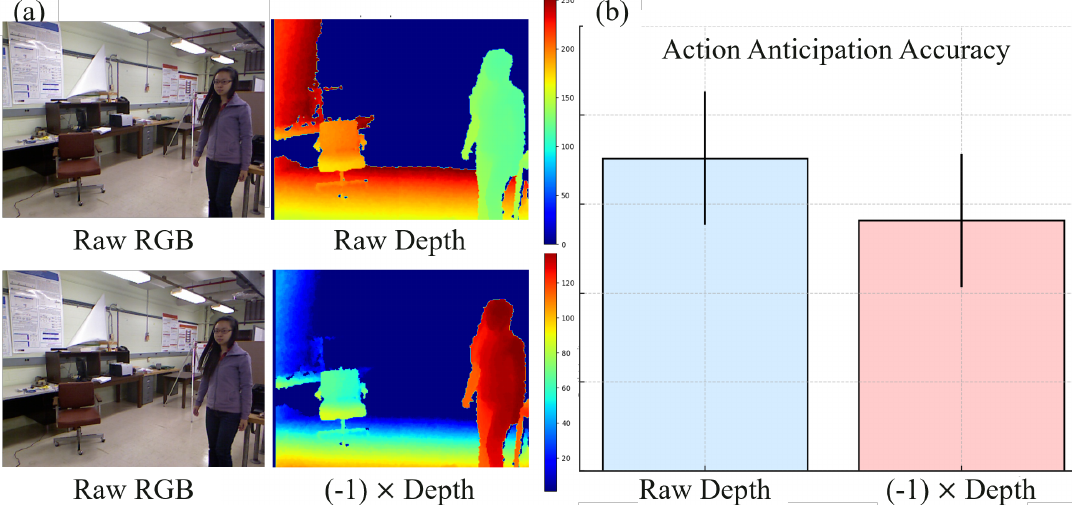}
\end{center}
\vspace{-0.5cm}
   \caption{Analysis of the impact of background depth in RGB-depth fusion for action anticipation. (a) shows two types of inputs from the UTKinect dataset: RGB images paired with either raw depth (top) or foreground-only depth (bottom), where foreground regions are segmented using joint annotations. (b) Average action anticipation performance (across 3 seeds) when fusing RGB with raw depth (blue) versus foreground-only depth (red).}
\label{fig:directionality-analysis}
\vspace{-0.5cm}
\end{figure}

\section{Discussion: why Depth?}
To support our choice of depth as the complementary modality in Section~\ref{sec:modality-selection}, we conduct a qualitative analysis to better understand why depth is particularly effective for multimodal fusion.

\subsection{Depth Mitigates Modality Collapse} We investigate whether using depth indeed mitigates modality collapse, as hypothesized in our modality selection strategy. In Table~\ref{tab:main-experiment}, the largest performance gap between our model and the SOTA baseline occurs on the DARai dataset, whereas the smallest gap is observed on the NTURGB+D dataset. We analyze both cases by plotting the eigenvalue spectra of the fused features. As shown in Figure~\ref{fig:modality-collapse}, We observe that the existing method (AFFT) exhibits modality collapse: in the DARai dataset (left), its fusion spectrum lies below that of RGB, indicating a collapse of RGB features; in contrast, on the NTURGB+D dataset (right), the fusion spectrum falls below that of depth. In both cases, our method maintains a broader and higher-magnitude spectrum. Notably, this trend holds on the both datasets, even when the performance gap between models is relatively small, reinforcing the effectiveness of our fusion strategy in preserving modality diversity.

\subsection{Depth Delivers Contextual Information} To demonstrate the importance of depth information, as shown in Figure~\ref{fig:foreground-depth}, we conduct experiments on the UTKinect dataset, where most actions occur primarily in the foreground. Using the provided joint annotations, we segment the depth maps into foreground-only depth and raw depth as shown in Figure~\ref{fig:foreground-depth} (a). We then fuse each version with RGB and compare their performance on the action anticipation task. Figure~\ref{fig:foreground-depth} (b) shows that even in a foreground-centric dataset where foreground-only information might appear sufficient, raw depth consistently yields better performance. To better understand this phenomenon, we measure the change in Effective Rank of the RGB stream. As shown in Figure~\ref{fig:foreground-depth} (c), when raw depth which includes background is used, RGB representations exhibit a more substantial increase in expressive capacity. In contrast, with foreground-only depth, the increase in RGB representation is notably smaller. In conclusion, depth provides more than just motion cues from the foreground—it also delivers critical spatial and contextual information from the background. Thus, even background depth plays a vital role in structuring scene understanding, reinforcing depth as a key modality.

\subsection{Directionality in Depth Features Matters} As shown in Figure~\ref{fig:directionality-analysis}, to test the importance of directionality in depth features, we negate the depth features (i.e., multiply by $-1$) before fusion as shown in Figure~\ref{fig:directionality-analysis} (a). This operation preserves the magnitude of feature variation but flips their geometric interpretation. The performance drop in Figure~\ref{fig:directionality-analysis} (b) suggests that directionality of depth information matters in multi-modal fusion. This observation motivates the need for modality-aware and direction-sensitive fusion.

\end{document}